\DeclareMathOperator*{\argmax}{arg\,max}
\newtheorem{theorem}{Theorem}
\newtheorem{assumption}{Assumption}
\newtheorem{lemma}{Lemma}
\newtheorem{corollary}{Corollary}
\begin{document}

\title{Bayesian Optimization with Formal Safety Guarantees via Online Conformal Prediction}

\author{Yunchuan Zhang, \IEEEmembership{Graduate Student Member,~IEEE}, Sangwoo Park, \IEEEmembership{Member,~IEEE}, \\and~Osvaldo Simeone, \IEEEmembership{Fellow,~IEEE}
\thanks{The authors are with the King’s Communications, Learning and Information Processing (KCLIP) lab within the Centre for Intelligent Information Processing Systems (CIIPS), Department of Engineering, King’s College London, London, WC2R 2LS, UK. (email:\{yunchuan.zhang, sangwoo.park, osvaldo.simeone\}@kcl.ac.uk). This work was supported by the European Research Council (ERC) under the
European Union’s Horizon 2020 Research and Innovation Programme (grant agreement No. 725732), by the European Union’s Horizon Europe project CENTRIC (101096379),  by an Open Fellowship of the EPSRC (EP/W024101/1), and by the EPSRC project (EP/X011852/1).}
}

\markboth{Accepted By IEEE Journal of Selected Topics in Signal Processing}%
{Shell \MakeLowercase{\textit{et al.}}: A Sample Article Using IEEEtran.cls for IEEE Journals}


\maketitle

\begin{abstract}
Black-box zero-th order optimization is a central primitive for applications in fields as diverse as finance, physics, and engineering. In a common formulation of this problem, a designer sequentially attempts  candidate solutions, receiving noisy feedback on the value of each attempt from the system. In this paper, we study scenarios in which feedback is also provided on the \emph{safety} of the attempted solution, and the optimizer is constrained to limit the number of unsafe solutions that are tried throughout the optimization process.  Focusing on methods based on Bayesian optimization (BO), prior art has introduced an optimization scheme -- referred to as \textsf{S{\footnotesize AFE}O{\footnotesize PT}} --  that is guaranteed not to select \emph{any} unsafe solution with a controllable probability over feedback noise as long as strict assumptions on the safety constraint function are met. In this paper, a novel BO-based approach is introduced that satisfies safety requirements irrespective of properties of the constraint function. This strong theoretical guarantee is obtained at the cost of allowing for an arbitrary, controllable but non-zero, rate of violation of the safety constraint. The proposed method, referred to as \textsf{S{\footnotesize AFE}-B{\footnotesize OCP}}, builds on online conformal prediction (CP) and is specialized to the cases in which feedback on the safety constraint is either noiseless or noisy.  Experimental results on synthetic and real-world data validate the advantages and flexibility of the proposed \textsf{S{\footnotesize AFE}-B{\footnotesize OCP}}. 
\end{abstract}

\begin{IEEEkeywords}
Bayesian optimization, online conformal prediction, safe exploration.
\end{IEEEkeywords}

\section{Introduction}\label{sec: intro}
\subsection{Context and Scope}\label{ssec: context and scope}
\IEEEPARstart{P}{roblems} as diverse as stock portfolio optimization and asset management \cite{michaud2008efficient}, capacity allocation in energy systems \cite{wang2019capacity}, material discovery \cite{xu2021self}, calibration and optimization of quantum systems \cite{cortes2022quantbo}, and scheduling and optimization of wireless systems \cite{zhang2022scheduling,zhang2023metabo} can all be formulated as \emph{black-box zero-th order} optimizations. In such problems, the objective to be optimized can only be accessed on individual candidate solutions, and no further information is retrieved apart from the value of the objective. As illustrated in Fig. \ref{fig: intro flow}, in a common formulation of this problem, a designer sequentially attempts  candidate solutions, receiving noisy feedback on the value of each attempt from the system. In this paper, we study scenarios in which feedback is also provided on the \emph{safety} of the attempted solution, and the optimizer is constrained to limit the number of unsafe solutions that are tried throughout the optimization process \cite{sui2015safeopt, Felix2016safeopt, matteo2019goose,sui2018stagewise,  rothfuss2023meta}. 

\begin{figure}[t]

  \centering
  \centerline{\includegraphics[scale=0.37]{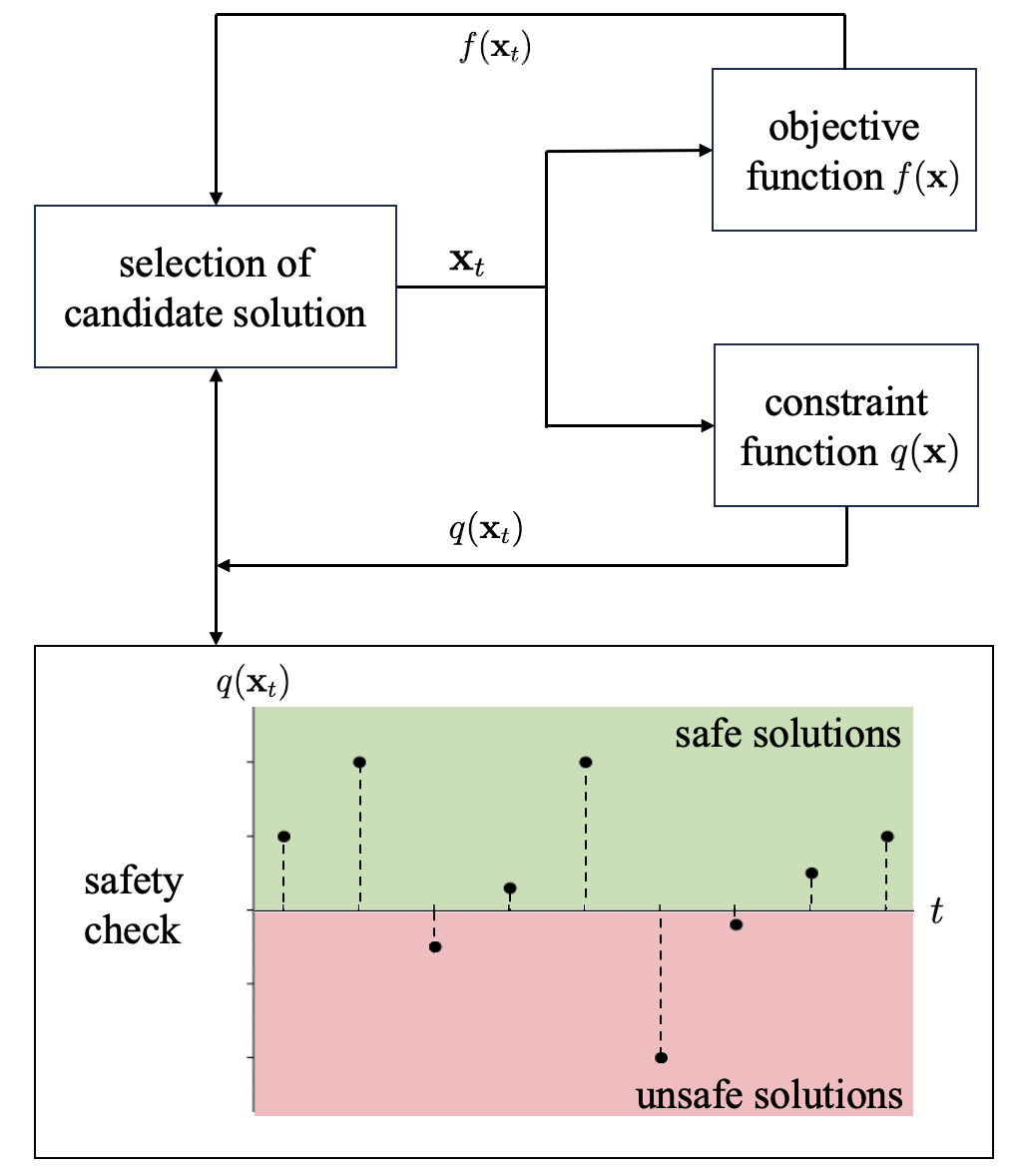}}
  \caption{This paper studies black-box zero-th order optimization with safety constraints. At each step $t=1,2,...$ of the sequential optimization process, the optimizer selects a candidate solution $\mathbf{x}_t$ and receives noisy feedback on the values of the objective function $f(\mathbf{x}_t)$ and of the constraint function $q(\mathbf{x}_t)$. Candidate solutions $\mathbf{x}_t$ yielding a negative value for the constraint function, $q(\mathbf{x}_t)<0$, are deemed to be unsafe. We wish to keep the safety violation rate, i.e., the fraction of unsafe solutions attempted during the optimization process, below some tolerated threshold.}
  \label{fig: intro flow}

\end{figure}

As an example, consider the problem of discovering pharmaceuticals for a particular condition (see, e.g., \cite{bengio2023gflownet}). A pharmaceutical company may try different molecules by carrying out costly trials with patients. Such trials would return not only an indication of the effectiveness of the candidate cure, but also an indication of possible side effects. A reasonable goal is that of finding a maximally effective compound, while minimizing the number of molecules that are found to have potential side effects during the optimization process.


Typical tools for the solution of black-box zero-th order optimization construct surrogates of the objective function that are updated as information is collected by the optimizer.  This can be done using tools from reinforcement learning, such as bandit optimization \cite{slivkins2019introduction}, or Bayesian optimization (BO) \cite{mockus1989global, frazier2018tutorial,Maggi2021bogp,eriksson2019scalable}.

Focusing on methods based on BO, prior art has introduced an optimization scheme -- referred to as \textsf{S{\footnotesize AFE}O{\footnotesize PT}} \cite{sui2015safeopt, Felix2016safeopt} --  that is guaranteed not to select \emph{any} unsafe solution with a controllable probability with respect to feedback noise. This theoretical guarantee is, however, only valid if the optimizer has access to information about the constraint function. In particular, reference \cite{sui2015safeopt, Felix2016safeopt} assumes that the constraint function belongs to a reproducible kernel Hilbert space (RKHS), and that it has a known finite RKHS norm. In practice, specifying such information may be difficult, since the constraint function is a priori unknown.

In this paper, a novel BO-based approach is introduced that satisfies safety requirements irrespective of properties of the constraint function. This guarantee is obtained at the cost of allowing for an arbitrary, controllable but non-zero, rate of violation of the safety constraint. The proposed method, referred to as \textsf{S{\footnotesize AFE}-B{\footnotesize OCP}}, builds on online conformal prediction (CP) \cite{gibbs2021adaptive, feldman2023achieving}, and is specialized to the cases in which feedback on the safety constraint is either noiseless or noisy.

\subsection{Related Work}\label{ssec: related work}
Existing constrained sequential black-box zero-th order optimizers that leverage BO, collectively referred as Safe-BO schemes, target a strict safety requirement whereby no safety violations are allowed. Accordingly, all candidate solutions attempted by the optimizer must be safe \cite{sui2015safeopt,Felix2016safeopt, sui2018stagewise, matteo2019goose, rothfuss2023meta}. As mentioned in the previous subsection, such stringent safety requirements can only be guaranteed by making strong assumptions on the knowledge available regarding the safety constraint function.  In particular, all the existing works on Safe-BO, with a notable exception of \cite{rothfuss2023meta}, either assume knowledge of the smoothness properties of the constraint function when dealing with deterministic constraint function \cite{sui2015safeopt, sui2018stagewise, Felix2016safeopt, matteo2019goose}, or treating the constraint function as a random realization of a Gaussian process with a known kernel when dealing with random constraint function \cite{Felix2016safeopt}. 



When the mentioned assumptions or the surrogate model on the constraint function are invalid or infeasible, the existing methods cannot provide any formal safety guarantees. In order to mitigate this problem, reference \cite{rothfuss2023meta} proposed to apply meta-learning \cite{hospedales2021meta} to estimate a suitable surrogate model for the constraint function using  additional data that are assumed to be available from other, similar, optimization problems. However, no formal safety guarantees are available for the approach.

In a related line of work, the constrained BO approaches in \cite{gelbart2014cbo,lobato2016ces,Gardner2014cbo} target a constrained optimization problem, but allow unlimited safety violations during the optimization process. More recent references \cite{marco2020excursion,yu2022learning} considered an explicit budget of safety violations for probabilistic constraints, but did not provide any formal safety guarantees.


CP is a general framework for the calibration of statistical models \cite{vovk2005algorithmic}. CP methods can be applied to pre-trained machine learning models with the goal of ensuring that the model's outputs provide reliable estimates of their uncertainty.   There are two main classes of CP techniques: \emph{offline CP}, which leverages offline calibration data for this purpose \cite{vovk2005algorithmic, angelopoulos2021gentle}; and \emph{online CP}, which uses feedback on the reliability of past decisions to adjust the post-processing of model's outputs \cite{gibbs2021adaptive, feldman2023achieving}. In both cases, CP offers theoretical guarantees on the quality of the uncertainty quantification provided by the decisions of the system.

The relevance of \emph{online CP} for the problem of interest, illustrated in Fig. 1, is that, as the optimizer attempts multiple solutions over time, it needs to maintain an estimate of the constraint function. In order to ensure the safety of the candidate solutions selected by the optimizers, it is important that such estimates come with well-calibrated uncertainty intervals. In this paper, we leverage the theoretical guarantees of online CP in order to define novel BO-based safe optimization strategies.


The only existing combination of CP and BO we are aware of  are provided by \cite{Samuel2022bocp}, which apply \emph{offline} CP to BO for the solution of an \emph{unconstrained} optimization problem. The approach aims at improving the acquisition function while accounting for  observation noise that goes beyond the standard homoscedastic Gaussian assumption. These prior works do not address safety requirements.

\begin{table}[!t]
\caption{State of the art on Safe-BO against the proposed Safe-BOCP\label{table: comparison}}
\centering
\begin{tabular}{|c||c|c|}
\hline
&\makecell[c]{$\text{Safe-BO}$ \\ \cite{sui2015safeopt, sui2018stagewise, Felix2016safeopt, matteo2019goose, rothfuss2023meta}} & \text{\textsf{S{\footnotesize AFE}-B{\footnotesize OCP}}  (ours)} \\
\hline
\text{Target safety violation rate} & 0 & (0,1]\\
\hline
\text{Assumption-free safety guarantee} & \ding{55} & \ding{51} \\
\hline
\end{tabular}

\end{table}


\subsection{Main Contributions}\label{ssec: main contributions}

 In this paper, we introduce \textsf{S{\footnotesize AFE}-B{\footnotesize OCP}}, a novel BO-based optimization strategy for constrained black-box zero-th order problems with safety constraints. \textsf{S{\footnotesize AFE}-B{\footnotesize OCP}}  provides \emph{assumptions-free} guarantees on the safety level of the attempted candidate solutions, while enabling any non-zero target safety violation level. As summarized in Table  \ref{table: comparison}, this contrasts with the state-of-the-art papers \cite{sui2015safeopt, sui2018stagewise, Felix2016safeopt,matteo2019goose, rothfuss2023meta} that only target the most stringent safety constraint with no safety violations throughout the optimization process, while relying on strong assumptions on the constraint function \cite{sui2015safeopt, sui2018stagewise, Felix2016safeopt,matteo2019goose}.
 


To summarize, the main contributions of the paper are as follows:\\
\noindent $\bullet$     We introduce the deterministic \textsf{S{\footnotesize AFE}-B{\footnotesize OCP}} (\textsf{D-S{\footnotesize AFE}-B{\footnotesize OCP}}) algorithm, which assumes noiseless feedback on the constraint function and targets a flexible safety constraint on the average number of candidate solutions that are found to be unsafe. The approach is based on a novel combination of online CP and Safe-BO methods.\\
    \noindent $\bullet$ For the case in which feedback on the constraint function is noisy, we introduce   the probabilistic  \textsf{S{\footnotesize AFE}-B{\footnotesize OCP}} (\textsf{P-S{\footnotesize AFE}-B{\footnotesize OCP}}) algorithm, which targets a flexible safety constraint on the \emph{probability} that the average number of candidate solutions that are found to be unsafe exceeds a controllable threshold. The method relies on a ``caution-increasing'' back-off mechanism that compensates for the uncertainty on the safety feedback received from the system.  \\
   \noindent $\bullet$We prove that both \textsf{D-S{\footnotesize AFE}-B{\footnotesize OCP}} and \textsf{P-S{\footnotesize AFE}-B{\footnotesize OCP}} meet their target safety requirements irrespective of the properties of the constraint function.\\
   \noindent $\bullet$ We validate the performance of all the proposed methods and theorems on a synthetic data set and on real-world applications.

The rest of the paper is organized as follows. Sec. \ref{sec: problem}  formulates the constrained black-box zero-th order problem with safety constraints. The general framework of Safe-BO, as well as the representative, state-of-the-art, algorithm \textsf{S{\footnotesize AFE}O{\footnotesize PT}}, are reviewed in Sec. \ref{sec: SBO} and Sec. \ref{sec: safeopt}, respectively. The proposed \textsf{S{\footnotesize AFE}-B{\footnotesize OCP}} methods are  introduced in the following sections,  with \textsf{D-S{\footnotesize AFE}-B{\footnotesize OCP}} presented in Sec. \ref{sec: d-safe-bocp} and \textsf{P-S{\footnotesize AFE}-B{\footnotesize OCP}} described in Sec. \ref{sec: bocp}. Experimental results on synthetic dataset are provided in Sec. \ref{sec: experiments}, and Sec. \ref{sec: real world} demonstrates results on real-world applications. Finally, Sec. \ref{sec: conclusion} concludes the paper.

\section{Problem Formulation}\label{sec: problem}
In this section, we describe the constrained black-box zero-th order optimization problems for safety-critical scenarios studied in this work. Then, we introduce the general solution framework of interest in the next section, which is referred to as Safe-BO \cite{sui2015safeopt,sui2018stagewise, Felix2016safeopt,matteo2019goose, rothfuss2023meta}.

\subsection{Optimization Problem and Safety Constraint}\label{ssec: problem statement}
We focus on constrained optimization problems of the form
\begin{align}
    \max\limits_{\mathbf{x}\in\mathcal{X}} f(\mathbf{x}) \quad
    \text{s.t.}  \quad q(\mathbf{x})\geq 0, \label{eq: opt target}
\end{align}
where objective function $f(\mathbf{x})$ and constraint function $q(\mathbf{x})$ are real valued; and  $\mathcal{X}$ is some specified subset of the $d$-dimensional vector space $\mathbbm{R}^d$. Let $f^{\mathrm{opt}}$ denote the maximum value of the problem \eqref{eq: opt target}, which we assume to be finite. We also assume that the set of optimal solutions, achieving the optimal value $f^{\mathrm{opt}}$, is not empty. We write any optimal solution as $\mathbf{x}^{\text{opt}}\in\mathcal{X}$ with $f^{\mathrm{opt}}=f(\mathbf{x}^{\text{opt}})$. Furthermore, we assume that there is a known, non-empty, set $\mathcal{S}_0\subset \mathcal{X} $ of safe solutions, i.e., \begin{equation}\label{eq:safezero} \mathcal{S}_0 \subseteq \{\mathbf{x} \in \mathcal{X}\textrm{: } q(\mathbf{x})\geq 0 \}. \end{equation} This subset may be as small as a single safe solution $\mathbf{x}_0$ with $q(\mathbf{x}_0) \geq 0$, i.e., $\mathcal{S}_0=\{\mathbf{x}_0\}$.

We address the optimization problem \eqref{eq: opt target} under the following conditions. \\
\noindent $\bullet$ \emph{Zero-th-order black-box access}: The real-valued objective function $f(\mathbf{x})$ and constraint function  $q(\mathbf{x})$ are a priori unknown, and only accessible as zero-th-order black boxes. This implies that, given a candidate solution  $\mathbf{x}$,  the optimizer can evaluate both functions, obtaining the respective values $f(\mathbf{x})$ and  $q(\mathbf{x})$. In practice, the evaluations are often noisy, resulting in the observation of noisy values $\tilde{f}(\mathbf{x})$ and $\tilde{q}(\mathbf{x})$. No other information, such as gradients, is obtained by the optimizer about the functions.\\
\noindent $\bullet$ \emph{Efficient optimization}: The optimizer wishes to minimize the number of accesses to both functions $f(\mathbf{x})$ and $q(\mathbf{x})$, while producing a feasible and close-to-optimal solution $\mathbf{x}^*\in\mathcal{X}$. That is, we wish for the optimizer to output a vector $\mathbf{x}^*\in\mathcal{X}$ that satisfies the constraint $q(\mathbf{x}^*)\geq 0$, with an objective value $f(\mathbf{x}^*)$ close to the maximum value $f^{\mathrm{opt}}$. The performance of the optimizer can be measured by the \emph{optimality ratio} 
\begin{equation} \Delta f(\mathbf{x}^*)=\frac{f(\mathbf{x}^*)}{f^{\mathrm{opt}}}.\label{eq: optimality ratio}
\end{equation}
\\
\noindent $\bullet$ \emph{Safety}: Interpreting the inequality $q(\mathbf{x})\geq0$ as a safety constraint, we consider choices of the optimization variable $\mathbf{x}\in\mathcal{X}$ that result in a negative value of the constraint function $q(\mathbf{x})$ to be \emph{unsafe}, unless the number of such violations of the constraint are kept below a threshold. Accordingly, we will require that the number of evaluations of the constraint function $q(\mathbf{x})$ that result in a violation of the inequality $q(\mathbf{x})\geq 0$ to be no larger than a pre-determined value. We will formalize this constraint next by describing the general operation of the optimizer.

\subsection{Sequential Surrogate-Based Safe Optimization} \label{ssec: sequential safe bo}

Starting from a given solution $\mathbf{x}_0\in\mathcal{S}_0$ \eqref{eq:safezero}, the optimizer sequentially produces \emph{candidate solutions} $\mathbf{x}_1,...,\mathbf{x}_T \in \mathcal{X}$ across $T$ \emph{trials} or \emph{iterations}. At each iteration $t$, the optimizer receives noisy observations of the objective value $f(\mathbf{x}_t)$ as 
\begin{align}
    y_t=f(\mathbf{x}_t)+\epsilon_{f,t}, \label{eq: noisy f obs}
\end{align}
as well as a noisy observation of the constraint value $q(\mathbf{x}_t)$ as 
\begin{align}
    z_t=q(\mathbf{x}_t)+\epsilon_{q,t}, \label{eq: noisy q obs}
\end{align}
where the observation noise for the objective, $\epsilon_{f,t}\sim \mathcal{N}(0,\sigma_f^2)$, is Gaussian with variance $\sigma_f^2$, while the observation noise for the constraint, $\epsilon_{q,t}$, can follow any distribution provided that it has a known upper bound on the one-sided right-tail probability (see Assumption \ref{ap: assump 1} in Sec. \ref{ssec: p-safe-bocp adaptive scaling} for details).

We focus on optimizers that maintain \emph{surrogate models} of functions $f(\mathbf{x})$ and $q(\mathbf{x})$ in order to select the next iterate. To elaborate, let us write as $\mathcal{O}_t$ the overall history of past iterates $(\mathbf{x}_0,...,\mathbf{x}_t)$ and past observations $(y_0,z_0,...,y_t,z_t)$ at the end of the $t$-th iteration, i.e., \begin{equation}\mathcal{O}_t=(\mathbf{x}_0,...,\mathbf{x}_t,y_0,...,y_t,z_0,...,z_t).\end{equation} As we detail in the next section, the optimizer maintains probability distributions $p(f|\mathcal{O}_t)$ and $p(q|\mathcal{O}_t)$ on the functions $f(\mathbf{x})$ and $q(\mathbf{x})$ across all values $\mathbf{x}\in\mathcal{X}$ based on the available information $\mathcal{O}_t$. The distributions $p(f|\mathcal{O}_t)$ and $p(q|\mathcal{O}_t)$ summarize the belief of the optimizer regarding the values of the two functions.

At the next iteration $t+1$, the optimizer leverages the distributions $p(f|\mathcal{O}_t)$ and $p(q|\mathcal{O}_t)$ to obtain iterate $\mathbf{x}_{t+1}$ as follows.\\
\noindent $\bullet$ \emph{Safe set}: Using distribution $p(q|\mathcal{O}_t)$, the optimizer identifies a safe set $\mathcal{S}_{t+1}\subseteq\mathcal{X}$, containing solutions $\mathbf{x}\in\mathcal{X}$ deemed by the optimizer to be safe, i.e., to satisfy the constraint $q(\mathbf{x})\geq0$.\\
\noindent $\bullet$ \emph{Acquisition}: Using distributions $p(f|\mathcal{O}_t)$ and $p(q|\mathcal{O}_t)$, the optimizer selects the next iterate $\mathbf{x}_{t+1}\in\mathcal{S}_{t+1}$, with the aim of maximizing the likelihood of obtaining a large, i.e., close to 1, optimality ratio \eqref{eq: optimality ratio}.


\subsection{Safety Constraints}\label{ssec: safety constraints}
We now formalize the safety constraint by distinguishing the cases in which the observations \eqref{eq: noisy q obs} of constraint function $q(\mathbf{x})$ are: (\emph{i}) \emph{noiseless}, i.e., we have $z_t=q(\mathbf{x}_t)$ in \eqref{eq: noisy q obs} with noise power $\sigma_q^2=0$; and (\emph{ii)} \emph{noisy}, i.e., we have a positive observation noise power $\sigma_q^2 > 0$ in \eqref{eq: noisy q obs}.

\subsubsection{Deterministic Safety Constraint} 
Noiseless observations of the constraint function values allow the optimizer to keep track of the number of iterates $\mathbf{x}_t$ that result in violations of the non-negativity constraint in problem (\ref{eq: opt target}). Accordingly, with $\sigma^2_q=0$, we impose that the non-negativity constraint $q(\mathbf{x}_t)\geq0$ be violated no more than a tolerated fraction $\alpha\in[0,1]$ of the $T$ iterations. Specifically, given a \emph{target violation rate} $\alpha\in[0,1]$, this results in the deterministic safety requirement
\begin{align}
    \label{eq:goal}
    \text{violation-rate}(T) := \frac{1}{T}\sum_{t=1}^T \mathbbm{1}(q(\mathbf{x}_t) < 0) \leq \alpha,
\end{align}
where $\mathbbm{1}(\cdot)$ is the indicator function, i.e., we have $\mathbbm{1}(\text{true})=1$ and 
 $\mathbbm{1}(\text{false})=0$. Therefore, in this first case, we target the maximization of function $f(x)$ subject to the safety constraint (\ref{eq:goal}) on the optimization process.
 

\subsubsection{Probabilistic Safety Constraint} 
In the presence of observation noise on the constraint, i.e., with a positive observation noise power $\sigma_q^2>0$, the optimizer cannot guarantee the deterministic constraint \eqref{eq:goal}.  Rather, targeting problem (\ref{eq: opt target}), the optimizer can only aim at ensuring that the constraint \eqref{eq:goal} be satisfied with a probability no smaller than a \emph{target reliability level} $1-\delta$, with $\delta\in(0,1]$. This results in the \emph{probabilistic} safety constraint
\begin{equation} 
\Pr(\text{violation-rate}(T)\leq \alpha)\geq1-\delta,\label{eq: probabilistic goal}
\end{equation} in which the probability is taken with respect to the observation noise variables $\{\epsilon_{q,t}\}_{t=1}^T$ for the constraint function $q(\mathbf{x})$ in \eqref{eq: noisy q obs}. Therefore, in this second case, we target the maximization of function $f(x)$ subject to the safety constraint (\ref{eq: probabilistic goal}) on the optimization process.


\section{Safe Bayesian Optimization}\label{sec: SBO}
We adopt BO as the underlying surrogate-based optimization strategy. When deployed to address the problem of safe black-box optimization defined in the previous section, BO-based schemes are referred to collectively as \emph{Safe-BO} \cite{sui2015safeopt, sui2018stagewise,Felix2016safeopt, matteo2019goose, rothfuss2023meta}. As illustrated in Fig. \ref{fig: safebo framework}, Safe-BO models objective function $f(\mathbf{x})$ and constraint function $q(\mathbf{x})$ by using independent Gaussian processes (GPs) as surrogate models, producing the distributions $p(f|\mathcal{O}_t)$ and $p(q|\mathcal{O}_t)$ introduced in Sec. \ref{ssec: sequential safe bo}. In this section, we first review background material on GPs in Sec. \ref{ssec: gp}. Then, we discuss a general approach to define safe sets $\mathcal{S}_{t+1}$ on the basis of the current distribution $p(q|\mathcal{O}_t)$ in Sec. \ref{ssec: safe set}. 
\begin{figure}[t]

  \centering
  \centerline{\includegraphics[scale=0.54]{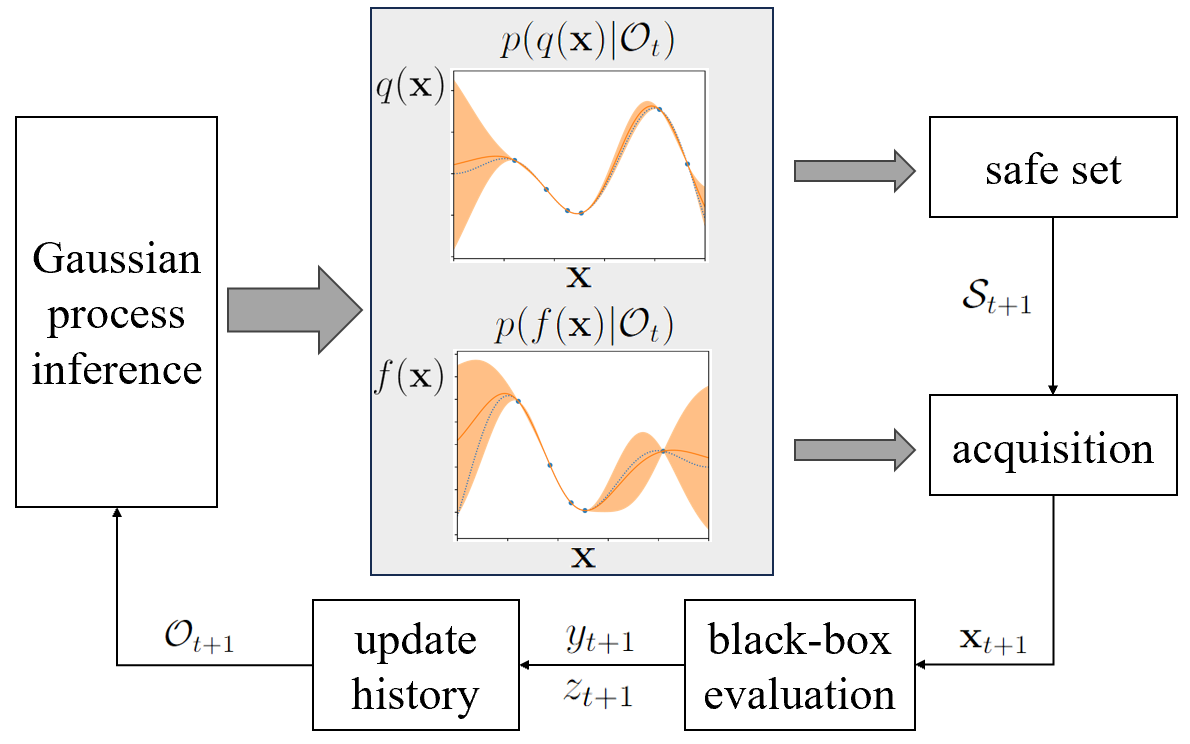}}
  \caption{Block diagram of Safe-BO schemes consisting of the main steps of safe set creation, producing the safe set $\mathcal{S}_{t+1}$, and of acquisition, selecting the next iterate $\mathbf{x}_{t+1}$.}
  \label{fig: safebo framework}

\end{figure}

\subsection{Gaussian Process}\label{ssec: gp}
Consider an unknown scalar-valued function $g(\mathbf{x})$ with input $\mathbf{x}\in\mathbbm{R}^d$. GP models such a function by assuming that, for any collection $(\mathbf{x}_1,...,\mathbf{x}_N)$ of inputs, the corresponding outputs $(g(\mathbf{x}_1),...,g(\mathbf{x}_1))$ follow a multivariate Gaussian distribution. The Gaussian distribution is characterized by a mean function $\mu(\mathbf{x})$ with $\mathbf{x}\in\mathbbm{R}^d$, and kernel function $\kappa(\mathbf{x},\mathbf{x}')$ for $\mathbf{x},\mathbf{x}'\in\mathbbm{R}^d$ \cite{Rasmussen2004}. An examples of a kernel function is the radial basis function (RBF) kernel\begin{align}\kappa(\mathbf{x},\mathbf{x}')=\exp(-h||\mathbf{x}-\mathbf{x}'||^2), \label{eq: rbf kernel}
\end{align}
which depends on a  bandwidth parameter $h>0$. Specifically, for given inputs $(\mathbf{x}_1,...,\mathbf{x}_N)$, collectively denoted as $\mathbf{X}$, the output vector $(g(\mathbf{x}_1),...,g(\mathbf{x}_1))$ follows a Gaussian distribution $\mathcal{N}(\boldsymbol{\mu}(\mathbf{X}),\mathbf{K}(\mathbf{X}))$, with $N\times1$ mean  vector $\boldsymbol{\mu}(\mathbf{X})=[\mu(\mathbf{x}_1),...,\mu(\mathbf{x}_N)]^{\sf T}$, and  $N\times N$ covariance matrix $\mathbf{K}(\mathbf{X})$ with each $(n,n')$-th entry given by $\kappa(\mathbf{x}_n,\mathbf{x}_{n'})$.

Assume that the output $g(\mathbf{x})$ is observed in the presence of independent Gaussian noise as
\begin{align}
    y=g(\mathbf{x})+\epsilon, \label{eq: noisy g obs}
\end{align}
with $\epsilon\sim \mathcal{N}(0,\sigma^2)$. We write as $\mathbf{y}=[y_1,..., y_N]^{\sf T}$ 
 the $N\times 1$ vector collecting the noisy outputs \eqref{eq: noisy g obs} for inputs $(\mathbf{x}_1,...,\mathbf{x}_N)$. An important property of GPs is that, given the history $\mathcal{O}=(\mathbf{X},\mathbf{y})$ of previous observations $\mathbf{y}$ for inputs $\mathbf{X}$, the posterior distribution $p(y|\mathbf{x},\mathcal{O})$ of a new output $y$ corresponding to any input $\mathbf{x}$ has a Gaussian distribution with mean $\mu(\mathbf{x}|\mathcal{O})$ and variance $\sigma^2(\mathbf{x}|\mathcal{O})$, i.e., 
\begin{subequations}
    \begin{equation}
    p(g(\mathbf{x})|\mathcal{O})= \mathcal{N}(\mu(\mathbf{x}|\mathcal{O}), \sigma^2(\mathbf{x}|\mathcal{O})),\label{eq: GP posterior}
    \end{equation}
    \begin{equation}
    \text{with}\quad \mu(\mathbf{x}|\mathcal{O})=\mu(\mathbf{x})+\boldsymbol{\kappa}(\mathbf{x})^{\sf T}(\mathbf{K}(\mathbf{X})+\sigma^2\mathbf{I}_N)^{-1}(\mathbf{y}-\boldsymbol{\mu}(\mathbf{X})),\label{eq: GP mean}\end{equation}
    \begin{equation} \text{and}\quad\sigma^2(\mathbf{x}|\mathcal{O})=\kappa(\mathbf{x},\mathbf{x})-\boldsymbol{\kappa}(\mathbf{x})^{\sf T}(\mathbf{K}(\mathbf{X})+\sigma^2\mathbf{I}_N)^{-1}\boldsymbol{\kappa}(\mathbf{x}),\quad\label{eq: kernel}\end{equation}
\end{subequations}
with $N\times1$ covariance vector $\boldsymbol{\kappa}(\mathbf{x})=[\kappa(\mathbf{x},\mathbf{x}_1),\hdots,\kappa(\mathbf{x},\mathbf{x}_N)]^{\sf T}$ and identity matrix $\mathbf{I}_N\in\mathbbm{R}^{N\times N}$.

\subsection{Credible Intervals and Safe Set}
\label{ssec: safe set}
Let us return to the operation of sequential optimizers based on BO. As explained in the previous section, at the end of iteration $t$, the optimizer has attempted solutions $(\mathbf{x}_1,...,\mathbf{x}_t)$, which are collectively referred to as $\mathbf{X}_t$. For these inputs, it has observed the noisy values $\mathbf{y}_t=[y_1,...,y_t]^{\sf T}$ in  \eqref{eq: noisy f obs} of the objective function, as well as the noisy values $\mathbf{z}_t=[z_1,...,z_t]^{\sf T}$ in  \eqref{eq: noisy q obs} for the constraint function. As we reviewed in Sec. \ref{ssec: gp}, GPs allow the evaluation of the posterior distributions \begin{equation}\label{eq:posterior_f}p(f(\mathbf{x})|\mathcal{O}_t)=p(f(\mathbf{x})|\mathbf{X}_t,\mathbf{y}_t)\end{equation} and \begin{equation}\label{eq:posterior_q}p(q(\mathbf{x})|\mathcal{O}_t)=p(q(\mathbf{x})|\mathbf{X}_t,\mathbf{z}_t)\end{equation} for a new candidate solution $\mathbf{x}$,  given the history $\mathcal{O}_t=(\mathbf{X}_t,\mathbf{y}_t,\mathbf{z}_t)$ consisted of the previous attempts $\mathbf{X}_t$ and its corresponding noisy observations $\mathbf{y}_t$ and $\mathbf{z}_t$. As we discuss next, these posterior distributions are used by Safe-BO methods to construct \emph{credible intervals}, which quantify the residual uncertainty on the values of functions $f(\mathbf{x})$ and $q(\mathbf{x})$ at any candidate solution $\mathbf{x}$.

Introducing a \emph{scaling parameter} $\beta_{t+1}>0$, the credible interval for the value of the  objective function $f(\mathbf{x})$  for input $\mathbf{x}$ at the end of iteration $t$, or equivalently at the beginning of iteration $t+1$, is defined by lower bound $f_l(\mathbf{x}|\mathcal{O}_t)$ and upper bound $f_u(\mathbf{x}|\mathcal{O}_t)$ given by 
\begin{align}
    \mathcal{I}_f(\mathbf{x}|\mathcal{O}_t)=[&f_l(\mathbf{x}|\mathcal{O}_t),f_u(\mathbf{x}|\mathcal{O}_t)]\nonumber\\=[&\mu_f(\mathbf{x}|\mathbf{X}_t,\mathbf{y}_t)-\beta_{t+1} \sigma_f(\mathbf{x}|\mathbf{X}_t,\mathbf{y}_t),\nonumber\\&\mu_f(\mathbf{x}|\mathbf{X}_t,\mathbf{y}_t)+\beta_{t+1}\sigma_f(\mathbf{x}|\mathbf{X}_t,\mathbf{y}_t)],\label{eq: f credible interval}
\end{align} where the mean  $\mu_f(\mathbf{x}|\mathbf{X}_t,\mathbf{y}_t)$ and the standard deviation $\sigma_f(\mathbf{x}|\mathbf{X}_t,\mathbf{z}_t)$ are defined as in \eqref{eq: GP mean} and \eqref{eq: kernel}, respectively. 
In a similar manner, the credible interval for the constraint function $q(\mathbf{x})$ is defined as
\begin{align}
    \mathcal{I}_q(\mathbf{x}|\mathcal{O}_t)=[&q_l(\mathbf{x}|\mathcal{O}_t),q_u(\mathbf{x}|\mathcal{O}_t)]\nonumber\\=[&\mu_q(\mathbf{x}|\mathbf{X}_t,\mathbf{z}_t)-\beta_{t+1} \sigma_q(\mathbf{x}|\mathbf{X}_t,\mathbf{z}_t),\nonumber\\&\mu_q(\mathbf{x}|\mathbf{X}_t,\mathbf{z}_t)+\beta_{t+1}\sigma_q(\mathbf{x}|\mathbf{X}_t,\mathbf{z}_t)],\label{eq: q credible interval}
\end{align}where the mean  $\mu_q(\mathbf{x}|\mathbf{X}_t,\mathbf{y}_t)$ and the standard deviation $\sigma_q(\mathbf{x}|\mathbf{X}_t,\mathbf{z}_t)$ are also defined as in \eqref{eq: GP mean} and \eqref{eq: kernel}, respectively.

Under the Gaussian model assumed by GP, the intervals \eqref{eq: f credible interval} and \eqref{eq: q credible interval} include the true function values $f(\mathbf{x})$ and $q(\mathbf{x})$ for a given input $\mathbf{x}$ with probability
\begin{align}
    \mathrm{P}(\beta_{t+1})=2F(\beta_{t+1})-1, \label{eq: Q function}
\end{align}
where $F(\cdot)$ is the cumulative distribution function (CDF) of standard Gaussian random variable $F(z) = \Pr(Z \leq z)$ with  $Z \sim \mathcal{N}(0,1)$. 
Therefore, the lower bounds $f_l(\mathbf{x}|\mathcal{O}_t)$ and $q_l(\mathbf{x}|\mathcal{O}_t)$ in the credible intervals \eqref{eq: f credible interval} and \eqref{eq: q credible interval}, respectively, serve as \emph{pessimistic} estimates of the objective and constraint values at the confidence level defined by probability $\mathrm{P}(\beta_{t+1})$. Furthermore, under the same confidence level, the upper bounds $f_u(\mathbf{x}|\mathcal{O}_t)$ and $q_u(\mathbf{x}|\mathcal{O}_t)$ in \eqref{eq: f credible interval} and \eqref{eq: q credible interval} describe \emph{optimistic} estimates of the objective and constraint values, respectively. That said, it is important to stress that, since the Gaussian model assumed by GP is generally \emph{misspecified}, there is no guarantee on the actual probability that the credible intervals $\mathcal{I}_f(\mathbf{x}|\mathcal{O}_t)$ and $\mathcal{I}_q(\mathbf{x}|\mathcal{O}_t)$ include the true values $f(\mathbf{x})$ and $q(\mathbf{x})$. These intervals, in fact, are guaranteed to include the true functions values with probability $\mathrm{P}(\beta_{t+1})$ only under the GP model.

In order to meet the safety requirement \eqref{eq:goal} or \eqref{eq: probabilistic goal}, Safe-BO methods define a \emph{safe set} of candidate solutions $\mathbf{x}\in\mathcal{X}$ that are likely to  satisfy the constraint $q(\mathbf{x})\geq 0$. To this end, the optimizer selects the scaling factor $\beta_{t+1}$ so as to ensure some desired ``safety'' probability $\mathrm{P}(\beta_{t+1})$. Then, leveraging the GP model, Safe-BO methods adopt the pessimistic estimate of the value of constraint function given by  $q_l(\mathbf{x}|\mathcal{O}_t)$ in (\ref{eq: q credible interval}) as a conservative estimate of the constraint function. Accordingly, the safe set $\mathcal{S}_{t+1}$ is defined as the set of all feasible solutions $\mathbf{x}\in\mathcal{X}$ for which the conservative estimate $q_l(\mathbf{x}|\mathcal{O}_t)$ of constraint function $q(\mathbf{x})$ predicts the solution $\mathbf{x}$ to be safe, i.e.,  
\begin{align}
    \mathcal{S}_{t+1}=\mathcal{S}(\mathcal{O}_t|\beta_{t+1})= \{\mathbf{x}\in\mathcal{X}:q_l(\mathbf{x}|\mathcal{O}_t)\geq0\}  \cup  \mathcal{S}_0. \label{eq: safebo safe set}
\end{align}
The safe set  includes the known initial set $\mathcal{S}_0$ of safe solutions  in (\ref{eq:safezero}), ensuring a non-empty safe set \cite{Felix2016safeopt}.

Safe-BO schemes choose as the first solution $\mathbf{x}_0$ a point randomly selected from the initial safe set $\mathcal{S}_0$. For the following  iterations, while all Safe-BO schemes adopt the same definition of the safe set  \eqref{eq: safebo safe set}, the realization of the acquisition process selecting the next iterate $\mathbf{x}_{t+1}$ differentiates the schemes proposed in prior \cite{sui2015safeopt, sui2018stagewise, Felix2016safeopt,matteo2019goose, rothfuss2023meta}. In the next section, we specifically describe the operation of \textsf{S{\footnotesize AFE}O{\footnotesize PT}} \cite{sui2015safeopt,Felix2016safeopt}.

\section{\textsf{S{\footnotesize AFE}O{\footnotesize PT}}}\label{sec: safeopt}
In this section, we review \textsf{S{\footnotesize AFE}O{\footnotesize PT}} \cite{sui2015safeopt,Felix2016safeopt} a representative state-of-the-art Safe-BO method, which will serve as a reference for the proposed \textsf{S{\footnotesize AFE}-B{\footnotesize OCP}} strategies introduced in the next section. 

\subsection{Scope and Working Assumptions}\label{ssec: rkhs}
\textsf{S{\footnotesize AFE}O{\footnotesize PT}} addresses problem \eqref{eq: opt target} under a strict version of the probabilistic safety constraint \eqref{eq: probabilistic goal} with target violation rate $\alpha=0$ and arbitrary target reliability level $1-\delta$. In order to allow for a zero violation rate ($\alpha=0$) to be a feasible goal, \textsf{S{\footnotesize AFE}O{\footnotesize PT}} makes the assumption that the constraint function $q(\mathbf{x})$ in \eqref{eq: opt target} lies in the RKHS $\mathcal{H}_{\kappa}$ associated with the same kernel function $\kappa(\mathbf{x},\mathbf{x}')$ assumed by GP inference (see Sec. \ref{ssec: gp}). In this sense, the model adopted by GP is assumed by \textsf{S{\footnotesize AFE}O{\footnotesize PT}} to be well specified.

Formally, the mentioned assumption made by \textsf{S{\footnotesize AFE}O{\footnotesize PT}} enforces that the function can be expressed as
\begin{align}
q(\mathbf{x}) = \sum_{i=1}^m a_i \kappa(\mathbf{x}, \mathbf{x}_i) \label{eq: rkhs for q}
\end{align}
for some vectors $\{\mathbf{x}_i\in\mathbbm{R}^d\}_{i=1}^m$, real coefficients $\{a_i\}_{i=1}^m$, and integer $m$. For a function $q(\mathbf{x})$ of the form \eqref{eq: rkhs for q}, the \emph{squared RKHS norm} is defined as
\begin{align}
    ||q||^2_{\kappa}=\sum_{i=1}^{m}\sum_{j=1}^ma_ia_j\kappa(\mathbf{x}_i,\mathbf{x}_j).\label{eq:RKHS_norm}
\end{align}
Furthermore, a useful property of constraint function $q(\mathbf{x})$ in RKHS $\mathcal{H}_{\kappa}$ is that it is upper bounded by a function of their squared RKHS norm as
\begin{align}
    \label{eq:CS_ineq}
    |q(\mathbf{x})| \leq  \kappa(\mathbf{x},\mathbf{x})^{1/2} ||q||_{\kappa}
\end{align}
for all values $\mathbf{x}$ in their domain. The property \eqref{eq:CS_ineq} is leveraged by \textsf{S{\footnotesize AFE}O{\footnotesize PT}} by assuming that the RKHS norm of the constraint function $q(\mathbf{x})$ is upper bounded by a known constant $B$, i.e.,
\begin{align}
    ||q||_{\kappa}\leq B. \label{eq: norm bound B}
\end{align}

\subsection{Safe Set Creation}\label{ssec: update credible interval}
Safe-BO determines the safe set $\mathcal{S}_{t+1}$ in \eqref{eq: safebo safe set} using the scaling parameter
\begin{align}
    \beta_{t+1}=B+4\sigma_q\sqrt{\gamma_{t}+1-\ln(\delta)},\label{eq: rkhs beta}
\end{align}
where $B$ is the constant appearing in the assumed upper bound \eqref{eq: norm bound B}; $\sigma_q^2$ is the known observation noise power in \eqref{eq: noisy q obs}; $1-\delta$ is the target reliability level in \eqref{eq: probabilistic goal}; and $\gamma_t$ is the \emph{maximal mutual information} between the true values $(q(\mathbf{x}_1),...,q(\mathbf{x}_t))$ of the constraint function and the corresponding $t$ noisy observations $(\mathbf{z}_1,...,\mathbf{z}_t)$ when evaluated under the model assumed by GP. This quantity can be evaluated as \cite{Felix2016safeopt} \begin{align}
    \gamma_t=\max_{\mathbf{X}'_t=(\mathbf{x}'_1,...,\mathbf{x}'_t)}\bigg(\frac{1}{2}\log\Big|\mathbf{I}_t+\sigma_q^{-2}\mathbf{K}_q(\mathbf{X}'_t) \Big|\bigg),\label{eq: max mi}
\end{align}
where $\mathbf{I}_t$ is the $t\times t$ identity matrix and $\mathbf{K}_q(\mathbf{X}'_t)$ is the $t\times t$ covariance matrix defined in Sec. \ref{ssec: gp}. Evaluating \eqref{eq: max mi} requires a maximization over all possible inputs sequences $\mathbf{X}'_t=(\mathbf{x}'_1,...,\mathbf{x}'_t)$, hence in practice it is often addressed via greedy algorithms (see, e.g., \cite{srinivas2012tit}). We also observe that, in the limit of no observation noise, i.e., as $\sigma_q\rightarrow0$, the scaling parameter \eqref{eq: rkhs beta} tends to $\beta_t = B$. 

By choosing the scaling parameter $\beta_{t+1}$ as in \eqref{eq: rkhs beta}, under the key assumption (\ref{eq: norm bound B}), all the decisions in the safe set $\mathcal{S}_{t+1}$ \eqref{eq: safebo safe set} can be proved to be safe with high probability \cite[Lemma~1]{Felix2016safeopt} (see also \cite[Theorem 6]{srinivas2012tit}).

\subsection{Acquisition Process}\label{ssec: ap}
In this section, we detail the acquisition process adopted by \textsf{S{\footnotesize AFE}O{\footnotesize PT}} to select the next iterate $\mathbf{x}_{t+1}$ within the safe set $\mathcal{S}_{t+1}$. 

To start, \textsf{S{\footnotesize AFE}O{\footnotesize PT}} defines the set of \emph{potential optimizers} $\mathcal{M}_{t+1}$ 
as the set of all possible solutions $\mathbf{x}\in\mathcal{S}_{t+1}$ that may increase the objective function. It also maintains a set of \emph{possible expanders} $\mathcal{G}_{t+1}$ as the set of safe solutions that can potentially increase the size of the safe set $\mathcal{S}_{t+1}$ if selected. Then, given the potential optimizers $\mathcal{M}_{t+1}$ and the possible expanders $\mathcal{G}_{t+1}$, \textsf{S{\footnotesize AFE}O{\footnotesize PT}} chooses the solution $\mathbf{x} \in \mathcal{M}_{t+1} \cup \mathcal{G}_{t+1}$ that maximally reduces the larger uncertainty implied by the credible intervals \eqref{eq: f credible interval} and \eqref{eq: q credible interval}, i.e., 
\begin{align}
    \mathbf{x}_{t+1} = \argmax_{\mathbf{x} \in \mathcal{M}_{t+1} \cup \mathcal{G}_{t+1}} \max \{ \sigma_f(\mathbf{x}|\mathcal{O}_t), \sigma_q(\mathbf{x}|\mathcal{O}_t) \}. \label{eq: safeopt next solution}
\end{align}

We now describe the construction of sets $\mathcal{M}_{t+1}$ and $\mathcal{G}_{t+1}$. For the first, let us recall that the lower bound $f_l(\mathbf{x}|\mathcal{O}_t)$ in the credible interval \eqref{eq: f credible interval} can be viewed as a pessimistic estimate of the objective $f(\mathbf{x})$, while the upper bound $f_u(\mathbf{x}|\mathcal{O}_t)$ can be interpreted as an optimistic estimate of the same value. The set of potential optimizers, $\mathcal{M}_{t+1}$, includes all safe solutions $\mathbf{x}\in \mathcal{S}_{t+1}$ for which the optimistic estimate $f_u(\mathbf{x}|\mathcal{O}_t)$ is larger than the best pessimistic estimate $f_l(\mathbf{x}|\mathcal{O}_t)$ for all safe solutions $\mathbf{x}\in\mathcal{S}_{t+1}$. This set can be expressed mathematically as 
\begin{align}
    \mathcal{M}_{t+1}=\bigg\{\mathbf{x}\in \mathcal{S}_{t+1}\Big|f_u(\mathbf{x}|\mathcal{O}_t)\geq\max\limits_{\mathbf{x}'\in S_{t+1}}f_l(\mathbf{x}'|\mathcal{O}_t)\bigg\}.\label{eq: ap potential set}
\end{align} Note that this set is non-empty, since it includes at least the solution $\mathbf{x}$ that maximizes the lower bound $f_l(\mathbf{x}|\mathcal{O}_t)$.

\begin{algorithm}[t!]
\caption{\textsf{S{\footnotesize AFE}O{\footnotesize PT}}}\label{table: safeopt}
\SetKwInOut{Input}{Input}
\Input{GP priors $(\mu_f(\mathbf{x}),\kappa_f(\mathbf{x},\mathbf{x}'))$ and $(\mu_q(\mathbf{x}),\kappa_q(\mathbf{x},\mathbf{x}'))$, initial safe set $\mathcal{S}_0$, initial observation $\mathcal{O}_0$, assumed RKHS norm bound $B$, total number of optimization iterations $T$} 
\SetKwInOut{Output}{Output}
\Output{Decision $\mathbf{x}^*$}\
\textbf{Initialize} scaling parameters $\{\beta_{t}\}_{t=1}^{T+1}$ using \eqref{eq: rkhs beta}, $\mathbf{x}_1 = \textsf{S{\footnotesize AFE}O{\footnotesize PT}}(\mathcal{O}_0|\beta_1)$ \\
\For{\emph{$t=1,...,T$}}{
Observe $y_{t}$ and $z_{t}$ from candidate solution $\mathbf{x}_t$\\
Update the observation history $\mathcal{O}_{t} = \mathcal{O}_{t-1} \cup \{\mathbf{x}_{t}, y_{t}, z_{t}\}$\\
Update GPs with $\mathcal{O}_{t}$ as in \eqref{eq:posterior_f} and \eqref{eq:posterior_q}\\
$\mathbf{x}_{t+1} =  \textsf{S{\footnotesize AFE}O{\footnotesize PT}}(\mathcal{O}_t|\beta_{t+1})$
}
\textbf{Return} final decision $\mathbf{x}^* = \arg \max_{\mathbf{x} \in \mathcal{S}_{T+1}} f_l(\mathbf{x}|\mathcal{O}_T)$  \\
------------------------------------------------------------------\\
\textsf{S{\footnotesize AFE}O{\footnotesize PT}}$(\mathcal{O}_t|\beta_{t+1})$:\\
\quad Create credible intervals $\mathcal{I}_f(\mathbf{x}|\mathcal{O}_t)$ and $\mathcal{I}_q(\mathbf{x}|\mathcal{O}_t)$ using $\beta_{t+1}$ as in \eqref{eq: f credible interval} and \eqref{eq: q credible interval}\\
\quad Obtain safe set $\mathcal{S}_{t+1}$ as in \eqref{eq: safebo safe set}\\
\quad Update the set of potential optimizers $\mathcal{M}_{t+1}$ as in \eqref{eq: ap potential set}\\
\quad Update the set of possible expanders $\mathcal{G}_{t+1}$ as in \eqref{eq: ap expanders}\\
\quad \textbf{Return} the next iterate $\mathbf{x}_{t+1}$ in accordance to \eqref{eq: safeopt next solution}\\
\end{algorithm}

The set $\mathcal{M}_{t+1}$ accounts only for the objective value to select solutions from the safe set $\mathcal{S}_{t+1}$. In contrast, the set of possible expanders considers the potential impact of a selected candidate solution on the safe set. To formalize this concept, let us write $\mathcal{S}_{t+2}(\mathbf{x})$ for the safe set \eqref{eq: safebo safe set} evaluated by extending the current history $\mathcal{O}_t$ with the pair $(\mathbf{x},q_u(\mathbf{x}|\mathcal{O}_t))$ of candidate solution $\mathbf{x}$ and corresponding hypothetical observation of the optimistic value $q_u(\mathbf{x}|\mathcal{O}_t)$ of the constraint $q(\mathbf{x})$. Accordingly, we have
\begin{align}
    \mathcal{S}_{t+2}(\mathbf{x})=\mathcal{S}\Big(\mathcal{O}_t\cup(\mathbf{x},q_u(\mathbf{x}|\mathcal{O}_t))\Big|\beta_{t+1}\Big), \label{eq: S_t+2}
\end{align}
and the set of possible expanders is defined as
\begin{align}
    \mathcal{G}_{t+1}=\{\mathbf{x}\in \mathcal{S}_{t+1}: | \mathcal{S}_{t+2}(\mathbf{x}) \setminus \mathcal{S}_{t+1} |>0\}, \label{eq: ap expanders}
\end{align}
that is, as the set of all safe solutions that can potentially increase the size of the safe set. 

After $T$ trials, the final decision $\mathbf{x}^*$ is obtained by maximizing  the pessimistic estimate $f_l(\mathbf{x}|\mathcal{O}_T)$ of the objective function that is available after the last iteration over the safe set $\mathcal{S}_{T+1}$, i.e.,  \begin{equation} \mathbf{x}^* = \arg \max_{\mathbf{x} \in \mathcal{S}_{T+1}} f_l(\mathbf{x}|\mathcal{O}_T).\end{equation} The overall procedure of \textsf{S{\footnotesize AFE}O{\footnotesize PT}} is summarized in Algorithm \ref{table: safeopt}.

\subsection{Safety Property}\label{ssec: safeopt safety theorem}
\textsf{S{\footnotesize AFE}O{\footnotesize PT}} was shown in \cite{sui2015safeopt, Felix2016safeopt} to achieve the probabilistic safety constraint \eqref{eq: probabilistic goal} with $\alpha=0$, as long as the assumptions that the true constraint function $q(\mathbf{x})$ is of the form \eqref{eq: rkhs for q} and that the RKHS norm bound \eqref{eq: norm bound B} holds.
\begin{theorem}
    (Safety Guarantee of \textsf{S{\footnotesize AFE}O{\footnotesize PT}} \cite{Felix2016safeopt}) Assume that the RKHS norm of the true constraint function $q(\mathbf{x})$ is bounded by $B>0$ as in \eqref{eq: norm bound B}. By choosing the scaling parameter $\beta_{t+1}$ as in \eqref{eq: rkhs beta}, \textsf{S{\footnotesize AFE}O{\footnotesize PT}} satisfies the probabilistic safety constraint \eqref{eq: probabilistic goal} with $\alpha=0$. Furthermore, with ideal observations of the constraint function $q(\mathbf{x})$, i.e., $\sigma_q=0$, by choosing the scaling parameter as $\beta_{t+1}=B$, \textsf{S{\footnotesize AFE}O{\footnotesize PT}} meets the deterministic requirement \eqref{eq:goal} with $\alpha=0$.
    \label{theorem: safeopt theorem}
\end{theorem}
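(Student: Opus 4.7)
The plan is to reduce the claim to a known concentration inequality for functions in an RKHS observed through a Gaussian process model, and then to argue that the acquisition rule of \textsf{S{\footnotesize AFE}O{\footnotesize PT}} never leaves the safe set, so that the confidence event transfers directly into a safety event.

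The first step would be to invoke the standard uniform credible-interval bound (essentially Theorem~6 of \cite{srinivas2012tit} as used in \cite[Lemma 1]{Felix2016safeopt}): under assumption \eqref{eq: norm bound B} on $\|q\|_\kappa$, with $\beta_{t+1}$ chosen as in \eqref{eq: rkhs beta}, the event
\begin{equation*}
\mathcal{E} := \Bigl\{ q(\mathbf{x}) \in \mathcal{I}_q(\mathbf{x}|\mathcal{O}_t)\ \text{for all } \mathbf{x}\in\mathcal{X}\text{ and all }t=0,1,\hdots,T \Bigr\}
\end{equation*}
occurs with probability at least $1-\delta$, where the probability is over the observation noise $\{\epsilon_{q,t}\}_{t=1}^T$. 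The dependence of $\beta_{t+1}$ on $B$, on $\sigma_q$, on $\gamma_t$ and on $\ln(1/\delta)$ in \eqref{eq: rkhs beta} is precisely tuned so that this uniform bound holds; I would cite this result rather than re-derive it.

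The second step is to chain the confidence event to the safety event. By the construction of the safe set in \eqref{eq: safebo safe set}, any $\mathbf{x}\in\mathcal{S}_{t+1}$ either lies in the a-priori safe set $\mathcal{S}_0$ (in which case $q(\mathbf{x})\geq0$ by \eqref{eq:safezero}) or satisfies $q_l(\mathbf{x}|\mathcal{O}_t)\geq0$. On the event $\mathcal{E}$, the latter immediately yields $q(\mathbf{x})\geq q_l(\mathbf{x}|\mathcal{O}_t)\geq 0$. Since \textsf{S{\footnotesize AFE}O{\footnotesize PT}} selects every iterate $\mathbf{x}_{t+1}$ from $\mathcal{M}_{t+1}\cup\mathcal{G}_{t+1}\subseteq\mathcal{S}_{t+1}$, and $\mathbf{x}_0\in\mathcal{S}_0$, we get $q(\mathbf{x}_t)\geq0$ for all $t=0,\hdots,T$ on $\mathcal{E}$. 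Therefore $\text{violation-rate}(T)=0$ on $\mathcal{E}$, and taking $\Pr(\mathcal{E})\geq 1-\delta$ gives \eqref{eq: probabilistic goal} with $\alpha=0$.

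For the deterministic part ($\sigma_q=0$, $\beta_{t+1}=B$), I would replace the probabilistic concentration bound by the purely deterministic RKHS error bound: for any $q\in\mathcal{H}_\kappa$ with $\|q\|_\kappa\leq B$, the interpolating GP posterior satisfies $|q(\mathbf{x})-\mu_q(\mathbf{x}|\mathcal{O}_t)|\leq B\,\sigma_q(\mathbf{x}|\mathcal{O}_t)$ pointwise, a consequence of the reproducing property and Cauchy--Schwarz in $\mathcal{H}_\kappa$ (cf.\ \eqref{eq:CS_ineq}). Then the same chaining argument as above applies unconditionally, giving $q(\mathbf{x}_t)\geq0$ for all $t$ and hence \eqref{eq:goal} with $\alpha=0$.

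The main obstacle is the first step: the uniform-in-$\mathbf{x}$ and uniform-in-$t$ concentration inequality underlying \eqref{eq: rkhs beta} is nontrivial and relies on information-theoretic arguments involving the maximal mutual information $\gamma_t$. Since the statement of the theorem is explicitly quoted from \cite{Felix2016safeopt}, the cleanest strategy is to invoke that concentration bound as a black box and focus the proof on the deterministic bookkeeping that connects the safe-set construction to the safety metric in \eqref{eq:goal}--\eqref{eq: probabilistic goal}.
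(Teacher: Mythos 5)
Your proposal is correct and matches the route the paper itself takes: the paper does not prove Theorem~\ref{theorem: safeopt theorem} but simply cites \cite[Lemma~1]{Felix2016safeopt} and \cite[Theorem 6]{srinivas2012tit}, i.e., exactly the uniform confidence-interval bound you invoke as a black box, followed by the observation that every iterate lies in $\mathcal{S}_{t+1}$ so the confidence event implies zero violations. Your additional handling of the noiseless case via the deterministic RKHS power-function bound is also the standard argument and is sound.
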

From Theorem \ref{theorem: safeopt theorem}, as long as the Gaussian model assumed by GP is well specified -- in the sense indicated by the RKHS form \eqref{eq: rkhs for q} with known norm upper bound $B$ in \eqref{eq: rkhs beta} -- \textsf{S{\footnotesize AFE}O{\footnotesize PT}} ensures safe optimization with a zero target violation rate $\alpha=0$. In practice, however, it is hard to set a value for the constant $B$. Therefore, for any fixed constant $B$, the resulting algorithm does not have formal guarantees in terms of safety \cite{rothfuss2023meta}.

\section{Deterministic Safe-BO via Online Conformal Prediction}
\label{sec: d-safe-bocp}




As we have reviewed in Sec. \ref{sec: safeopt}, in order to achieve a zero target violation rate $\alpha=0$ in the safety constraints \eqref{eq:goal} and \eqref{eq: probabilistic goal}, \textsf{S{\footnotesize AFE}O{\footnotesize PT}} assumes that the constraint function $q(\mathbf{x})$  belongs to a specific family of functions. Other Safe-BO methods \cite{sui2018stagewise, Felix2016safeopt, matteo2019goose} also require the same assumption to guarantee the safety constraint (see Sec. \ref{sec: intro}). 
In the following two sections, we will introduce \textsf{S{\footnotesize AFE}-B{\footnotesize OCP}}, a novel Safe-BO scheme that achieves the safety constraint requirements \eqref{eq:goal} or \eqref{eq: probabilistic goal} without requiring \emph{any} assumptions on the underlying constraint function $q(\mathbf{x})$. This goal is met at the cost of obtaining a non-zero, controllable, target violation rate $\alpha \in (0,1]$ in the deterministic safety requirement \eqref{eq:goal} and in the probabilistic safety requirement \eqref{eq: probabilistic goal}. This section focuses on the case in which observations \eqref{eq: noisy q obs}  of the constraint function are ideal, i.e., $\epsilon_{q,t}=0$, hence aiming at achieving the deterministic safety constraint \eqref{eq:goal}. The next section addresses the case with noisy observations on the constraint function.

\subsection{Adaptive Scaling via Noiseless Feedback on Safety}
As detailed in Sec. \ref{sec: SBO},  \textsf{S{\footnotesize AFE}O{\footnotesize PT}} fixes \emph{a priori} the scaling parameters $\beta_{1},...,\beta_{T}$ to be used when forming the safe set \eqref{eq: safebo safe set}, along with the set of potential optimizers \eqref{eq: ap potential set} and possible expanders \eqref{eq: ap expanders}, irrespective of the actual history $\mathcal{O}_t$ of past iterates $\mathbf{X}_t$ and observations $\mathbf{y}_t$ and $\mathbf{z}_t$. This is done by leveraging the mentioned assumptions on the constraint function \eqref{eq: rkhs for q}--\eqref{eq: norm bound B}. In contrast, not relying on any assumption on the constraint function $q(\mathbf{x})$, the proposed \textsf{S{\footnotesize AFE}-B{\footnotesize OCP}}  selects the scaling parameter $\beta_{t+1}$ adaptively based on the history $\mathcal{O}_t$ by leveraging ideas from online CP \cite{gibbs2021adaptive, feldman2023achieving}.


In order to support the adaptive selection of a scaling parameter $\beta_{t+1}$ that ensures the deterministic safety constraint \eqref{eq:goal}, \textsf{S{\footnotesize AFE}-B{\footnotesize OCP}} maintains an \emph{excess violation rate} variable $\Delta\alpha_{t+1}$ across the iterations $t=1,...,T$. The variable $\Delta\alpha_{t+1}$ compares the number of previous unsafe candidate solutions $\mathbf{x}_t'$ with $t'=1,...,t$  to a tolerable number that depends on the target violation rate $\alpha$. The main idea is to use the excess violation rate $\Delta\alpha_{t+1}$ to update the parameter $\beta_{t+1}$: A larger excess violation rate $\Delta\alpha_{t+1}$ calls for a larger value of $\beta_{t+1}$ so as to ensure a more pronounced level of pessimism in the evaluation of the safe set \eqref{eq: safebo safe set}. This forces the acquisition function  \eqref{eq: safeopt next solution} to be more conservative, driving down the excess violation rate towards a desired non-positive value.

\subsection{\textsf{D-S{\footnotesize AFE}-B{\footnotesize OCP}}}
To define the excess violation rate, we first introduce the \emph{safety error signal} \begin{align}
    \text{err}_t = \mathbbm{1}(z_t<0), \label{eq: d-err_t}
\end{align}
which yields $\text{err}_t=1$ if the last iterate $\mathbf{x}_t$ was found to be unsafe based on the observation $z_t=q(\mathbf{x}_t)$, and $\text{err}_t=0$ otherwise.  An important property of schemes, like \textsf{S{\footnotesize AFE}O{\footnotesize PT}} and \textsf{D-S{\footnotesize AFE}-B{\footnotesize OCP}}, that rely on the use of safe sets of the form \eqref{eq: safebo safe set} is that one can ensure a zero error signal $\text{err}_t=0$ by setting $\beta_t=\infty$. In fact, with this maximally cautious selection, the safe set $\mathcal{S}_t$ includes only the initial safe set $\mathcal{S}_0$ in \eqref{eq:safezero}, which consists exclusively of safe solutions.

\begin{figure}[t]

  \centering
  \centerline{\includegraphics[scale=0.52]{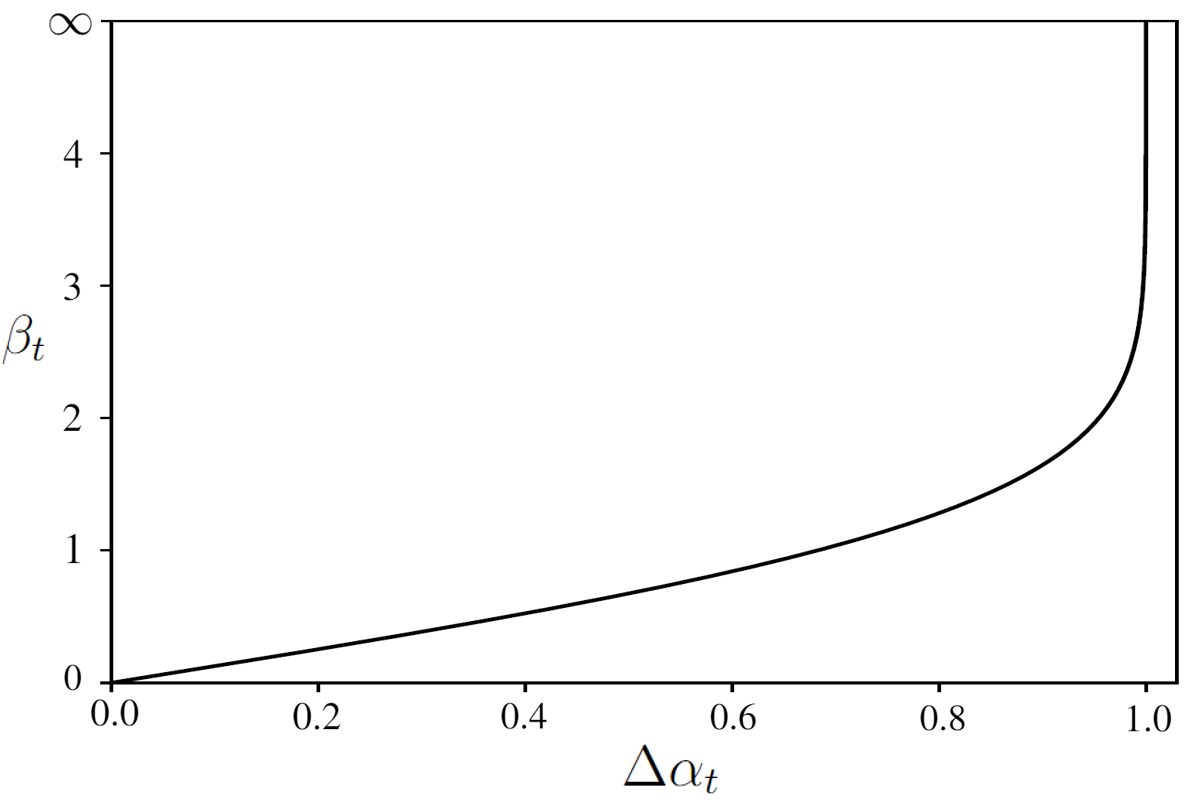}}
  \caption{Function $\beta_t=\varphi(\Delta\alpha_t)$ in \eqref{eq: q func}, which determines the scaling factor $\beta_t$ as a function of the excess violation rate $\Delta \alpha_t$.
  \label{fig: beta_alpha}}

\end{figure}

The excess violation rate  $\Delta\alpha_{t+1}$  measures the extent to which the average number of errors made so far, $1/t \cdot \sum_{t'=1}^t\text{err}_{t'}$,  exceeds an algorithmic target level $\alpha_{\text{algo}}$, which will be specified later.  Accordingly, the excess violation rate is updated as
\begin{align}
    \Delta\alpha_{t+1}=\Delta\alpha_{t}+\eta(\text{err}_{t}-\alpha_{\text{algo}}),\label{eq: update rule}
\end{align}
for a given update rate $\eta > 0$ and for any initialization $\Delta \alpha_1< 1$.  The relation between excess violation rate and the average number of errors becomes apparent by rewriting \eqref{eq: update rule} as\begin{align}
    \label{eq:unroll_dsafebocp}
    \Delta \alpha_{t+1} & = \Delta \alpha_1+ \eta \cdot \bigg( \sum_{t'=1}^t \text{err}_{t'} - \alpha_\text{algo}\cdot t\bigg) \nonumber \\ & =  \Delta \alpha_1+ \eta \cdot t \cdot \Big(\text{violation-rate}(t) - \alpha_\text{algo}\Big),
\end{align}
which is a linear function of  the difference between the violation rate up to time $t$ and the algorithmic target $\alpha_\text{algo}$. This implies  that the desired safety requirement (\ref{eq:goal}) can be equivalently imposed via the inequality 
\begin{align}
    \label{eq:unrolled_viol_rate_noiseless}
    \text{violation-rate}{(T)} & = \frac{\Delta \alpha_{T+1}-\Delta \alpha_1}{T\eta}  +\alpha_\text{algo} \leq \alpha.
\end{align} Therefore, controlling the violation rate requires us to make sure that the excess violation rate $\Delta\alpha_t$ does not grow too quickly with the iteration index $t$. 

\begin{algorithm}[t!]
\caption{\textsf{D-S{\footnotesize AFE}-B{\footnotesize OCP}}}\label{table: safe-bocp}
\SetKwInOut{Input}{Input}
\Input{GP priors $(\mu_f(\mathbf{x}),\kappa_f(\mathbf{x},\mathbf{x}'))$ and $(\mu_q(\mathbf{x}),\kappa_q(\mathbf{x},\mathbf{x}'))$, initial safe set $\mathcal{S}_0$, initial observation $\mathcal{O}_0$, total number of optimization iterations $T$, target violation rate $\alpha$, update rate $\eta>0$, initial excess violation rate $\Delta \alpha_1 < 1$}
\SetKwInOut{Output}{Output}
\Output{Decision $\mathbf{x}^*$}\
\textbf{Initialize} $\mathbf{x}_1 = \textsf{S{\footnotesize AFE}O{\footnotesize PT}}(\mathcal{O}_0|\beta_1)$ using $\beta_1 = \varphi(\Delta \alpha_1)$ \eqref{eq: q func}, algorithmic target level $\alpha_{\text{algo}}$ as in \eqref{eq:alpha_algo} \\
\For{\emph{$t=1,...T$}}{
Observe $y_{t}$ and $z_{t}$ from candidate solution $\mathbf{x}_t$ \\
Update the observation history 
$\mathcal{O}_{t}= \mathcal{O}_{t-1} \cup \{\mathbf{x}_{t},y_{t},z_{t}\}$\\
Update GPs with $\mathcal{O}_{t}$ as in \eqref{eq:posterior_f} and \eqref{eq:posterior_q} \\
Evaluate error signal $\text{err}_{t} = \mathbbm{1}(z_{t} < 0)$ as in \eqref{eq: d-err_t}\\
Update excess violation rate $\Delta\alpha_{t+1}= \Delta\alpha_{t}+\eta(\text{err}_{t}-\alpha_{\text{algo}})$ as in \eqref{eq: update rule}\\
Update scaling parameter $\beta_{t+1}=\varphi(\Delta \alpha_{t+1})$ using \eqref{eq: q func}\\
$\mathbf{x}_{t+1}=$\textsf{S{\footnotesize AFE}O{\footnotesize PT}}$(\mathcal{O}_t|\beta_{t+1})$ from Algorithm \ref{table: safeopt}
}
\textbf{Return} final decision $\mathbf{x}^* = \arg \max_{\mathbf{x} \in \mathcal{S}_{T+1}} f_l(\mathbf{x}|\mathcal{O}_T)$ \\
\end{algorithm}

Intuitively, as mentioned, in order to control the value of the excess violation rate $\Delta \alpha_t$, we need to select values of $\beta_t$ that increase with $\Delta \alpha_t$. To this end, as summarized in Algorithm 2, inspired by the approach introduced by \cite{feldman2023achieving} in the context of online CP, the proposed \textsf{D-S{\footnotesize AFE}-B{\footnotesize OCP}} sets the parameter $\beta_t$ as \begin{align}\label{eq:betavar}
    \beta_t = \varphi(\Delta \alpha_t),
\end{align}where we have defined function  \begin{align}
    \label{eq: q func}
    \varphi(\Delta \alpha_t) = F^{-1}((\text{clip}(\Delta \alpha_t)+1)/2),
\end{align} with $F^{-1}(\cdot)$ being the inverse of the function $F(\cdot)$ \eqref{eq: Q function}, i.e., the inverse CDF of standard Gaussian distribution, and $\text{clip}(\Delta \alpha_t)= \max\{\min\{ \Delta \alpha_t, 1\}, 0\}$  being the clipping function. An illustration of the function \eqref{eq: q func} can be found in Fig.~\ref{fig: beta_alpha}. Furthermore, we set the algorithmic target level as  
\begin{align}
\label{eq:alpha_algo}
\alpha_\text{algo}=\frac{1}{T-1}\bigg(T\alpha -1 -\frac{1}{\eta}+\frac{\Delta\alpha_1}{\eta}\bigg).    
\end{align} The overall procedure of \textsf{D-S{\footnotesize AFE}-B{\footnotesize OCP}} is summarized in Algorithm \ref{table: safe-bocp}.  We next prove that \textsf{D-S{\footnotesize AFE}-B{\footnotesize OCP}} meets the reliability requirement \eqref{eq:unrolled_viol_rate_noiseless}.


\subsection{Safety Guarantees}

\textsf{D-S{\footnotesize AFE}-B{\footnotesize OCP}} is guaranteed to meet the deterministic safety constraint (\ref{eq:goal}) (or equivalently (\ref{eq:unrolled_viol_rate_noiseless})), as summarized in the next theorem.

\begin{theorem}[Safety Guarantee of \textsf{D-S{\footnotesize AFE}-B{\footnotesize OCP}}]\label{theorem: safe-bocp}
    Under noiseless observations of the constraint function ($\sigma_q^2=0$), \textsf{D-S{\footnotesize AFE}-B{\footnotesize OCP}} satisfies the deterministic safety constraint \eqref{eq:goal} for any pre-determined target violation rate $\alpha \in (0,1]$.
\end{theorem}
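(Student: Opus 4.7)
The plan is to leverage the identity derived in \eqref{eq:unrolled_viol_rate_noiseless}, namely $\text{violation-rate}(T)=\alpha_{\text{algo}}+(\Delta\alpha_{T+1}-\Delta\alpha_1)/(T\eta)$, which recasts the goal $\text{violation-rate}(T)\leq\alpha$ as the scalar inequality $\Delta\alpha_{T+1}\leq\Delta\alpha_1+T\eta(\alpha-\alpha_{\text{algo}})$. As I will verify at the end, the specific choice of $\alpha_{\text{algo}}$ in \eqref{eq:alpha_algo} has been engineered so that this target value equals $1+\eta(1-\alpha_{\text{algo}})$ after algebraic simplification; therefore the whole argument reduces to establishing the uniform upper bound $\Delta\alpha_t\leq 1+\eta(1-\alpha_{\text{algo}})$ for every $t\in\{1,\ldots,T+1\}$.

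The key step is to isolate the self-correcting property hidden in the definition of $\varphi$ in \eqref{eq: q func}. The clipping forces $\varphi(\Delta\alpha_t)=F^{-1}(1)=+\infty$ as soon as $\Delta\alpha_t\geq 1$, so $\beta_t=+\infty$ and the credible lower bound $q_l(\mathbf{x}|\mathcal{O}_{t-1})$ in \eqref{eq: q credible interval} becomes $-\infty$ at every $\mathbf{x}$ with positive posterior variance. Consequently the safe set \eqref{eq: safebo safe set} collapses to $\mathcal{S}_0$, whose elements are guaranteed to be safe by \eqref{eq:safezero}; combined with the noiseless feedback $z_t=q(\mathbf{x}_t)$, this forces $\text{err}_t=0$ whenever $\Delta\alpha_t\geq 1$. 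This is the ``hard wall'' that prevents $\Delta\alpha_t$ from running away above one.

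I then close the argument by induction on $t$. The base case $\Delta\alpha_1<1\leq 1+\eta(1-\alpha_{\text{algo}})$ follows from the assumed initialization. For the inductive step: if $\Delta\alpha_t<1$, then $\text{err}_t\leq 1$ gives $\Delta\alpha_{t+1}\leq\Delta\alpha_t+\eta(1-\alpha_{\text{algo}})\leq 1+\eta(1-\alpha_{\text{algo}})$; if $\Delta\alpha_t\geq 1$, the self-correcting property of the previous paragraph yields $\text{err}_t=0$ and hence $\Delta\alpha_{t+1}=\Delta\alpha_t-\eta\alpha_{\text{algo}}$, which is at most $\Delta\alpha_t$ when $\alpha_{\text{algo}}\geq 0$ and therefore bounded by the inductive hypothesis. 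Plugging $\Delta\alpha_{T+1}\leq 1+\eta(1-\alpha_{\text{algo}})$ into the identity and expanding $\alpha_{\text{algo}}$ from \eqref{eq:alpha_algo} returns exactly $\alpha$. The hard part is the ``$\Delta\alpha_t\geq 1$'' branch of the induction: the monotone-decrease argument there requires $\alpha_{\text{algo}}\geq 0$, which translates into $T$ being large enough relative to $\eta$ and $1-\Delta\alpha_1$. When $\alpha_{\text{algo}}$ is negative, I would fall back on a direct combinatorial bound on the error count through the first crossing time $\tau=\min\{t:\Delta\alpha_t\geq 1\}$, exploiting the fact that $\Delta\alpha_t$ is then monotonically non-decreasing so that errors can only accumulate during the first $\tau-1$ steps.
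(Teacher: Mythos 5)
Your proof follows essentially the same route as the paper's: the clipping in \eqref{eq: q func} forces $\beta_t=\infty$ once $\Delta\alpha_t\geq 1$, which collapses the safe set to $\mathcal{S}_0$ and zeroes the error signal, yielding the uniform bound $\Delta\alpha_{t+1}\leq 1+\eta(1-\alpha_{\text{algo}})$ that, plugged into \eqref{eq:unrolled_viol_rate_noiseless} with the choice \eqref{eq:alpha_algo}, gives the claim; your explicit induction is simply a more careful rendering of the paper's verbal ``maximum is attained just below $1$'' argument. Where you go beyond the paper is in flagging that the $\Delta\alpha_t\geq 1$ branch needs $\alpha_{\text{algo}}\geq 0$ --- the paper silently assumes this when it asserts that the update term equals $-\eta\alpha_{\text{algo}}<0$ --- but your fallback for $\alpha_{\text{algo}}<0$ remains a sketch, and in that regime (equivalently $\alpha T<1+(1-\Delta\alpha_1)/\eta$) the first-crossing argument only bounds the error count by roughly $1+(1-\Delta\alpha_1)/\eta$, which can exceed $\alpha T$ (e.g.\ a single error at $t=1$ already violates the budget when $\alpha T<1$), so this corner case is a limitation of the theorem as stated rather than a gap your combinatorial bound could close.
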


\begin{proof}
Function (\ref{eq:betavar}) implements the following mechanism: When $\Delta \alpha_t\geq 1$, it returns $\beta_{t}=\infty$, i.e., \begin{equation}\label{eq:mechanism} \Delta \alpha_t\geq 1  \Rightarrow \beta_{t}=\infty.\end{equation} As discussed earlier in this section, this ensures a zero error signal $\text{err}_{t}=0$. With this mechanism in place, one can guarantee the upper  bound \begin{equation}
\label{eq:bound_excess_rate}
\Delta \alpha_{t+1} < 1+\eta (1-\alpha_{\textrm{algo}})
\end{equation} for all $t\geq1$ given the mentioned initialization $\Delta \alpha_1 < 1$. This is because a value $\Delta \alpha_t\geq 1$ would cause the update term in \eqref{eq: update rule} to $-\eta \alpha_{\mathrm{algo}}<0$, and hence the maximum value is attained when $\Delta \alpha_t$ is approaching, but smaller than, 1, and an unsafe decision is made, causing an update equal to $\eta(1-\alpha_\text{algo})$.

Plugging bound \eqref{eq:bound_excess_rate} back into \eqref{eq:unrolled_viol_rate_noiseless}, yields the upper bound on the violation rate
\begin{align}
    \text{violation-rate}{(T)}\leq \frac{1+\eta(1-\alpha_\text{algo})-\Delta \alpha_1}{T\eta}  +\alpha_\text{algo}.
\end{align}Therefore, by setting \eqref{eq:alpha_algo}, we finally verify that  the deterministic safety requirement \eqref{eq:unrolled_viol_rate_noiseless} is satisfied.
\end{proof}

\section{Probabilistic Safe-BOCP}\label{sec: bocp}

We now turn to the case in which the observations \eqref{eq: noisy q obs} of constraint function $q(\mathbf{x})$ are noisy. The main challenge in extending the approach proposed in the previous section is the fact that the error signal \eqref{eq: d-err_t} is an unreliable indication of whether candidate $\mathbf{x}_t$ is safe or not due to the presence of the observation noise $\epsilon_{q,t}$. Accordingly, we start by proposing an alternative way to measure the excess violation rate.

\subsection{\textsf{P-S{\footnotesize AFE}-B{\footnotesize OCP}} }\label{ssec: p-safe-bocp adaptive scaling}


To proceed, we assume that the observation noise $\epsilon_{q,t}$ in \eqref{eq: noisy q obs} has a known upper bound on the right-tail probability $\text{Pr}(\epsilon_{q,t}\geq \omega)$ for all $\omega\in\mathbbm{R}$. This basic assumption is also adopted in the robust CP literature \cite[Theorem 1]{feldman2023conformal}. In Sec.~\ref{ssec: est upper bound}, we will illustrate how to further alleviate this assumption by assuming access to noise samples.

\begin{assumption}\label{ap: assump 1}
    The constraint observation noise $\epsilon_{q,t}$, which is independent over $t=1,...,T$, has a  known upper bound $F^+(\omega)$ on its one-sided right-tail probability, i.e.,
\begin{align}
    \text{Pr}(\epsilon_{q,t}\geq \omega)\leq F^+(\omega)\label{eq: right tail pr}
\end{align}
for all $t=1,...,T$ and any $\omega \in \mathbb{R}$.
\end{assumption}


The main idea underlying the proposed \textsf{P-S{\footnotesize AFE}-B{\footnotesize OCP}} is to count as unsafe all solutions $\mathbf{x}_t$ for which the noisy observation $z_t=q(\mathbf{x}_t)+\epsilon_{q,t}$ in \eqref{eq: noisy q obs} is smaller than some back-off threshold $\omega_q > 0$. Specifically, we define the safety error signal as
\begin{align}
    \text{err}_t = \mathbbm{1}(z_t< \omega_q ), \label{eq: noisy err_t non Gaussian}
\end{align}
where the corresponding threshold $\omega_q$ is obtained as
\begin{align}
    \omega_q = \inf\{\omega \in \mathbb{R}:  F^+(\omega) \leq 1- (1-\delta)^{\frac{1}{T}} \}.\label{eq: omega q}
\end{align}
The threshold $\omega_q$ increases with the target reliability level $1-\delta$ in the probabilistic safety constraint \eqref{eq: probabilistic goal}. In fact, a larger target reliability level calls for more caution in determining whether a given observation $z_t$ of the constraint function is likely to indicate an unsafe solution or not. 

The rationale behind the definitions \eqref{eq: noisy err_t non Gaussian}-\eqref{eq: omega q} is formalized by the following lemma, which relates  the true violation rate \eqref{eq:goal} to the estimated violation rate $\sum_{t=1}^T\text{err}_t/T$ using the error signal \eqref{eq: noisy err_t non Gaussian}.

\begin{algorithm}[t!]
\caption{\textsf{P-S{\footnotesize AFE}-B{\footnotesize OCP}}}\label{table: p-safe-bocp}
\SetKwInOut{Input}{Input}
\Input{GP priors $(\mu_f(\mathbf{x}),\kappa_f(\mathbf{x},\mathbf{x}'))$ and $(\mu_q(\mathbf{x}),\kappa_q(\mathbf{x},\mathbf{x}'))$, initial safe set $\mathcal{S}_0$, initial observation $\mathcal{O}_0$, total number of optimization iterations $T$, target violation rate $\alpha$, update rate $\eta>0$, initial excess violation rate $\Delta \alpha_1 < 1$}
\SetKwInOut{Output}{Output}
\Output{Decision $\mathbf{x}^*$}\
\textbf{Initialize} $\mathbf{x}_1 = \textsf{S{\footnotesize AFE}O{\footnotesize PT}}(\mathcal{O}_0|\beta_1)$ using $\beta_1 = \varphi(\Delta \alpha_1)$ \eqref{eq: q func}, algorithmic target level $\alpha_{\text{algo}}$ as in \eqref{eq:alpha_algo} \\
\For{\emph{$t=1,...T$}}{
Observe $y_{t}$ and $z_{t}$ from candidate solution $\mathbf{x}_t$ \\
Update the observation history 
$\mathcal{O}_{t}= \mathcal{O}_{t-1} \cup \{\mathbf{x}_{t},y_{t},z_{t}\}$\\
Update GPs with $\mathcal{O}_{t}$ as in \eqref{eq:posterior_f} and \eqref{eq:posterior_q} \\
Evaluate \emph{cautious} error signal $\text{err}_{t} = \mathbbm{1}(z_{t} < \omega_q)$ as in \eqref{eq: noisy err_t non Gaussian} with $\omega_q$ obtained from \eqref{eq: omega q}\\
Update excess violation rate $\Delta\alpha_{t+1}= \Delta\alpha_{t}+\eta(\text{err}_{t}-\alpha_{\text{algo}})$ as in \eqref{eq: update rule}\\
Update scaling parameter $\beta_{t+1}=\varphi(\Delta \alpha_{t+1})$ using \eqref{eq: q func}\\
$\mathbf{x}_{t+1}=$\textsf{S{\footnotesize AFE}O{\footnotesize PT}}$(\mathcal{O}_t|\beta_{t+1})$ from Algorithm \ref{table: safeopt}
}
\textbf{Return} final decision $\mathbf{x}^* = \arg \max_{\mathbf{x} \in \mathcal{S}_{T+1}} f_l(\mathbf{x}|\mathcal{O}_T)$  \\
\end{algorithm}



\begin{lemma}[Estimated Violation Rate]\label{lemma: est vio rate}
    For any iterates $\mathbf{x}_1,...,\mathbf{x}_T$, the true violation rate in \eqref{eq:goal} is upper bounded by the accumulated error signal rate in \eqref{eq: noisy err_t non Gaussian} with probability $1-\delta$, i.e., 
    \begin{align}
        \Pr\bigg(\text{violation-rate($T$)}\leq\frac{1}{T}\sum_{t=1}^T\text{err}_t\bigg)&\geq (1-F^+(\omega))^T\nonumber\\&=1-\delta, \label{eq: type ii error}
    \end{align} \label{lemma: type ii error}in which the probability is taken with respect to the observation noise variables $\{\epsilon_{q,t}\}_{t=1}^T$ for the constraint function $q(\mathbf{x})$ in \eqref{eq: noisy q obs}.
\end{lemma}
\begin{proof}
When a candidate solution $\mathbf{x}_t$ is unsafe, i.e., when $q(\mathbf{x}_t)< 0$, the probability that the error signal $\text{err}_t$ in \eqref{eq: noisy err_t non Gaussian} correctly reports an error, setting $\text{err}_t=1$, is lower bounded by $1-F^+(\omega)$. Therefore, the probability that the true violation rate \text{violation-rate($T$)} no larger than the estimated violation rate $\sum_{t=1}^{T}\text{err}_t/T=1$ is lower  bounded by the probability that all the errors correctly reported. This is, in turn, lower bounded by $(1-F^+(\omega))^T$ by the independence of the observation noise variables $\{\epsilon_{q,t}\}_{t=1}^T$.
\end{proof}

As specified in Algorithm~\ref{table: p-safe-bocp}, \textsf{P-S{\footnotesize AFE}-B{\footnotesize OCP}} follows the same steps in \textsf{D-S{\footnotesize AFE}-B{\footnotesize OCP}} with the caveat that the error signal \eqref{eq: noisy err_t non Gaussian} is used in lieu of \eqref{eq: d-err_t}. As we will prove next, the correction applied via the safety error signal \eqref{eq: noisy err_t non Gaussian} is sufficient to meet the probabilistic safety requirement \eqref{eq: probabilistic goal}.

\subsection{Safety Guarantees}

The safety guarantee of \textsf{P-S{\footnotesize AFE}-B{\footnotesize OCP}} is summarized in the following theorem. 

\begin{theorem}[Safety Guarantee of \textsf{P-S{\footnotesize AFE}-B{\footnotesize OCP}}]\label{theorem: p-safe-bocp}
    Under noisy observations of the constraint function and Assumption \ref{ap: assump 1}, \textsf{P-S{\footnotesize AFE}-B{\footnotesize OCP}} satisfies the probabilistic safety constraint \eqref{eq: probabilistic goal} for any pre-determined target violation rate $\alpha \in (0,1]$ and target reliability level $\delta \in (0,1)$.
\end{theorem}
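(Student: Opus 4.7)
My plan is to chain two pieces: Lemma \ref{lemma: type ii error}, which already bridges the true violation rate to the empirical error rate in a probabilistic fashion, and the deterministic unrolling used in the proof of Theorem \ref{theorem: safe-bocp}, which controls the empirical error rate itself. Since \textsf{P-S{\footnotesize AFE}-B{\footnotesize OCP}} reuses the update \eqref{eq: update rule}, the scaling map $\varphi$ in \eqref{eq: q func}, and the algorithmic target $\alpha_{\text{algo}}$ in \eqref{eq:alpha_algo} of \textsf{D-S{\footnotesize AFE}-B{\footnotesize OCP}}, and only swaps the noiseless error signal \eqref{eq: d-err_t} for the noisy \eqref{eq: noisy err_t}, the same unrolling should apply essentially verbatim, with Lemma \ref{lemma: type ii error} supplying the extra probabilistic layer that the noisy observations require.

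Concretely, I would first invoke Lemma \ref{lemma: type ii error} to obtain
\begin{equation*}
\Pr\Big(\text{violation-rate}(T) \leq \tfrac{1}{T}\sum_{t=1}^T \text{err}_t\Big) \geq 1-\delta,
\end{equation*}
where the probability is over $\{\epsilon_{q,t}\}_{t=1}^T$ and the slope $\omega=F^{-1}((1-\delta)^{1/T})$ is calibrated exactly to hit this confidence. Next I would replay the unrolling from Theorem \ref{theorem: safe-bocp}: summing the update \eqref{eq: update rule} yields the telescoping identity
\begin{equation*}
\tfrac{1}{T}\sum_{t=1}^T \text{err}_t = \alpha_{\text{algo}} + \tfrac{\Delta\alpha_{T+1}-\Delta\alpha_1}{T\eta},
\end{equation*}
and the clipping mechanism $\Delta\alpha_t\geq 1\Rightarrow \beta_t=\infty\Rightarrow \mathbf{x}_t\in\mathcal{S}_0$ gives $\Delta\alpha_{T+1}<1+\eta(1-\alpha_{\text{algo}})$ exactly as in the proof of Theorem \ref{theorem: safe-bocp}; substituting \eqref{eq:alpha_algo} then produces the bound $T^{-1}\sum_{t=1}^T \text{err}_t \leq \alpha$. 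Composing the two,
\begin{equation*}
\Pr(\text{violation-rate}(T)\leq \alpha) \geq \Pr\Big(\text{violation-rate}(T)\leq \tfrac{1}{T}\sum_{t=1}^T \text{err}_t\Big) \geq 1-\delta,
\end{equation*}
which is exactly the probabilistic safety constraint \eqref{eq: probabilistic goal}.

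The delicate point is the derivation of $\Delta\alpha_{T+1}<1+\eta(1-\alpha_{\text{algo}})$: in the noiseless \textsf{D-S{\footnotesize AFE}-B{\footnotesize OCP}} setting this used the implication ``$\beta_t=\infty \Rightarrow \text{err}_t=0$'' pointwise, whereas with $\sigma_q>0$ a safe iterate $\mathbf{x}_t\in\mathcal{S}_0$ can still yield $z_t<\sigma_q\omega$ for some unfavourable noise realizations. I would handle this by observing that the threshold calibration $\omega=F^{-1}((1-\delta)^{1/T})$ pushes the residual false-alarm probability on $\mathcal{S}_0$ into exactly the $\delta$-slack already budgeted by Lemma \ref{lemma: type ii error}, so that on the high-probability event where Lemma \ref{lemma: type ii error}'s bound holds the clipping effectively forces $\text{err}_t=0$ whenever $\beta_t=\infty$ and the Theorem \ref{theorem: safe-bocp}-style unrolling goes through with overall confidence $1-\delta$.
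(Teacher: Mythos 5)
Your overall architecture --- Lemma \ref{lemma: type ii error} to pass from the true violation rate to the empirical error rate, the Theorem \ref{theorem: safe-bocp} unrolling to bound the empirical rate deterministically, then composition of the two --- is exactly the paper's proof. The problem is the step you yourself flag as delicate, and your proposed patch for it does not work. The high-probability event of Lemma \ref{lemma: type ii error} controls \emph{missed detections}: it is the event that every iterate with $q(\mathbf{x}_t)<0$ is flagged. It says nothing about \emph{false alarms} on safe iterates, and the calibration \eqref{eq: omega} makes such false alarms likely rather than rare: for a boundary point $\mathbf{x}\in\mathcal{S}_0$ with $q(\mathbf{x})=0$ one has $\Pr(\text{err}_t=1)=\Pr(\epsilon_{q,t}<\sigma_q\omega)=F(\omega)=(1-\delta)^{1/T}$, which is close to one, not to zero. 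So conditioning on the Lemma \ref{lemma: type ii error} event does not force $\text{err}_t=0$ when $\beta_t=\infty$; once the algorithm is clipped into $\mathcal{S}_0$ it can keep accumulating flagged errors, $\Delta\alpha_t$ can grow past $1+\eta(1-\alpha_{\text{algo}})$, and the deterministic claim $T^{-1}\sum_{t=1}^T\text{err}_t\leq\alpha$ in your middle display can fail. (The paper's own proof asserts this bound ``with probability 1'' without addressing the point, so you have identified a genuine weakness in the argument; but absorbing the false-alarm probability into the $\delta$-slack is not the right mechanism.)

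A repair that does work avoids bounding $\sum_{t}\text{err}_t$ altogether. A \emph{true} violation can only occur in a round with $\Delta\alpha_t<1$, since $\Delta\alpha_t\geq1$ gives $\beta_t=\infty$ and hence $\mathbf{x}_t\in\mathcal{S}_0$, which is genuinely safe by \eqref{eq:safezero}. Therefore, on the Lemma \ref{lemma: type ii error} event, $T\cdot\text{violation-rate}(T)\leq N_1:=\sum_{t:\,\Delta\alpha_t<1}\text{err}_t$, and it suffices to bound $N_1$ deterministically. This follows from a potential argument on $\Phi_t=\min\{\Delta\alpha_t,\,1+\eta(1-\alpha_{\text{algo}})\}$: each round counted in $N_1$ changes $\Phi_t$ by exactly $+\eta(1-\alpha_{\text{algo}})$ (the update cannot overshoot the cap when starting from $\Delta\alpha_t<1$); rounds with $\text{err}_t=1$ and $\Delta\alpha_t\geq1$ do not decrease $\Phi_t$; and rounds with $\text{err}_t=0$ decrease it by at most $\eta\alpha_{\text{algo}}$. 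Summing over $t$ and using $\Phi_{T+1}-\Phi_1\leq 1+\eta(1-\alpha_{\text{algo}})-\Delta\alpha_1$ yields $N_1\leq T\alpha_{\text{algo}}+\bigl(1+\eta(1-\alpha_{\text{algo}})-\Delta\alpha_1\bigr)/\eta$, which equals $T\alpha$ under the choice \eqref{eq:alpha_algo}. Substituting $N_1/T$ for your empirical error rate, the composition with Lemma \ref{lemma: type ii error} then delivers \eqref{eq: probabilistic goal} as intended.
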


\begin{proof} Using the same arguments as in the proof of Theorem 2, the estimated violation rate can be upper bounded with probability 1 as \begin{align}
    \frac{1}{T}\sum_{t=1}^T\text{err}_t \leq \frac{1+\eta(1-\alpha_\text{algo})-\Delta \alpha_1}{T\eta}  +\alpha_\text{algo}.
\end{align} Using this bound with  Lemma~\ref{lemma: type ii error}, we conclude that, with probability at least $1-\delta$, in which the probability is taken over the observation noise variables $\{\epsilon_{q,t}\}_{t=1}^T$, we have bound on the \emph{true violation rate}
\begin{align}
    \text{violation-rate($T$)}\leq\frac{1}{T}\sum_{t=1}^T\text{err}_t \leq \alpha,
\end{align}
which recovers the probabilistic safety constraint \eqref{eq: probabilistic goal}.
\end{proof}

\subsection{Data-Driven Probability Bound}\label{ssec: est upper bound}
A possible challenge in applying \textsf{P-S{\footnotesize AFE}-B{\footnotesize OCP}} in practice is the fact that an upper bound $F^+(\omega)$ on the probability $\text{Pr}(\epsilon_{q,t}\geq\omega)$ may not be known \emph{a priori}. In this subsection, we provide a data-driven approach for evaluating an upper bound on the probability $\text{Pr}(\epsilon_{q,t}\geq\omega)$, assuming only access to independent and identically distributed (i.i.d.) observation noise samples.

\begin{lemma}
    [Estimated Upper Bound]\label{remark: est upper}
    Assume access to i.i.d. observation noise samples $\{\epsilon_{q,i}\}_{i=1}^m$. The empirical estimate of the right-tail probability 
    \begin{align}
        \hat{F}^+(\omega)=\frac{1}{m}\sum_{i=1}^m\mathbbm{1}(\epsilon_{q,i}>\omega),\label{eq: upper bound estimation}
    \end{align}
    when offset by $\psi>0$, provides an upper bound on $\text{Pr}(\epsilon_{q,i}>\omega)$ with probability
    \begin{align}
        &\Pr\big(\Pr(\epsilon_{q,i} \geq \omega, \forall \omega\in\mathbbm{R})\leq\hat{F}^+(\omega) + \psi \big) \nonumber\\&\geq 1-\exp(-2m\psi^2)
    \end{align}
    for any $\psi>\sqrt{\ln 2/2m}$.
\end{lemma}
Lemma~\ref{remark: est upper} is a direct application of Dvoretsky-Kiefer-Wolfowitz inequality \cite{massart1990tight}. 

Consequently, by using $\hat{F}^+(\omega) + \psi$ in lieu of $F^+(\omega)$ in (\ref{eq: omega q}), we have the following modified safety guarantee of  \textsf{P-S{\footnotesize AFE}-B{\footnotesize OCP}}.

\begin{corollary} \label{corr}
    Under noisy observations of the constraint function, \textsf{P-S{\footnotesize AFE}-B{\footnotesize OCP}} with $\hat{F}^+(\omega) + \psi$, for any $\psi>0$, in lieu of $F^+(\omega)$ in \eqref{eq: omega q}  satisfies the guarantee
    \begin{align}
        \Pr(\text{violation-rate}(T)\leq \alpha)\geq (1-\exp{(-2m\psi^2)})(1-\delta)
    \end{align}
    for any pre-determined target violation rate $\alpha \in (0,1]$ and target reliability level $\delta \in (0,1)$, where the probability is taken with respect to the observation noise variables $\{\epsilon_{q,t}\}_{t=1}^T$ as well as the $m$ i.i.d. noise samples in (\ref{eq: upper bound estimation}).
\end{corollary}
Corollary~\ref{corr} is obtained by combining Lemma~\ref{remark: est upper} and Theorem~\ref{theorem: p-safe-bocp}. Intuitively, with an increasing number $m$ of the constraint observation noise samples, the tightness of the safety guarantee in Theorem \ref{theorem: p-safe-bocp} is enhanced as a result of increasingly accurate observation noise estimation.

\section{Numerical Results For a Synthetic Benchmark}\label{sec: experiments}
In this section, we detail experimental results aimed at comparing \textsf{S{\footnotesize AFE}-B{\footnotesize OCP}} with \textsf{S{\footnotesize AFE}O{\footnotesize PT}} \cite{Felix2016safeopt} on a synthetic benchmark inspired by \cite{Felix2016safeopt}.

\subsection{Synthetic Dataset}\label{ssec: synthetic data}
In a manner similar to \cite{Felix2016safeopt}, we focus on a synthetic setting with a scalar optimization variable $\mathbf{x}\in\mathbbm{R}$ in which the objective function $f(\mathbf{x})$ is a realization of a GP with zero mean and RBF kernel  $\kappa^*(\mathbf{x},\mathbf{x}')$ \eqref{eq: rbf kernel} with bandwidth $h^*=1/1.62$, while the constraint function $q(\mathbf{x})$ is a function in this RKHS $\mathcal{H}_{\kappa^*}$ which has the form \eqref{eq: rkhs for q} with coefficients $\{a_i\}_{i=0}^{10}=[-0.05,-0.1,0.3,-0.3,0.5,0.5,-0.3,0.3,-0.1,-0.05]$ and scalars $\{\mathbf{x}_i\}_{i=1}^{10}=[-9.6,-7.4,-5.5,-3.3,-1.1,1.1,3.3,5.5,\\7.4,9.6]$. Accordingly, the constraint function $q(\mathbf{x})$ has RKHS norm $||q||_{\kappa^*}=1.69$ in  \eqref{eq: norm bound B}. In order to investigate the impact of misspecification of GP (see Sec.~\ref{ssec: rkhs}) on Safe-BO including the proposed \textsf{S{\footnotesize AFE}-B{\footnotesize OCP}}, we consider the two cases: (\emph{i}) \emph{well-specified GP} that uses $\kappa^*(\mathbf{x},\mathbf{x}')$ for the GP kernel, i.e., $\kappa(\mathbf{x},\mathbf{x}')=\kappa^*(\mathbf{x},\mathbf{x}')$; (\emph{ii}) \emph{misspecified GP} that uses RBF kernel with smaller bandwidth $h=1/14.58 < h^*$, i.e., $\kappa(\mathbf{x},\mathbf{x}')\neq \kappa^*(\mathbf{x},\mathbf{x}')$, with unknown $||q||_{\kappa}$.

\begin{figure}[t]

  \centering
  \centerline{\includegraphics[scale=0.46]{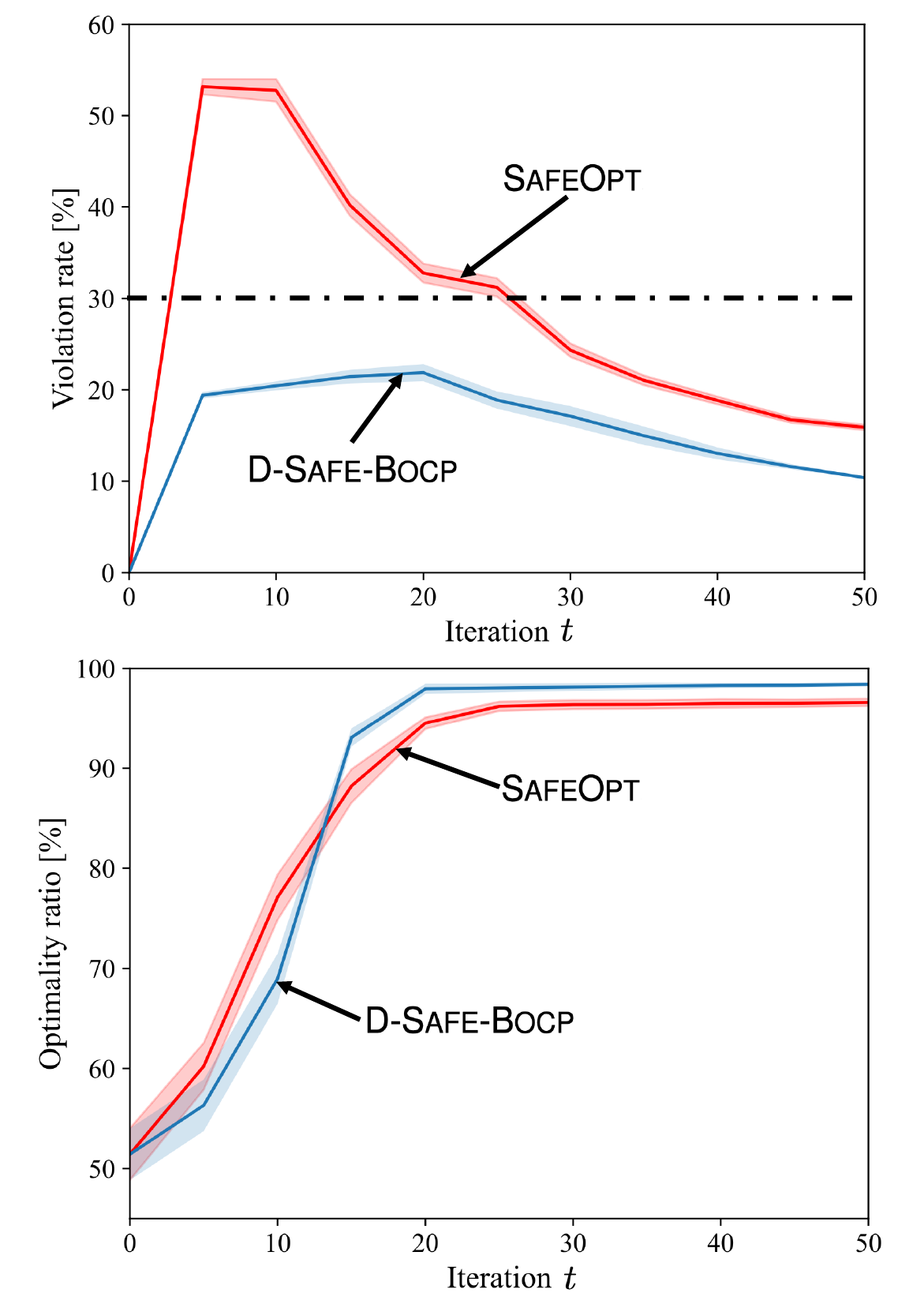}}
  \caption{Violation-rate$(t)$ (top) and optimality ratio (bottom) against iteration index $t$ with target violation rate $\alpha=0.3$ (dot-dashed line), update rate $\eta=2$, misspecified kernel bandwidth $h=1/14.58$, RKHS norm bound $B=||q||_{\kappa^*}$ and total number of iteration $T=50$.}\label{fig: vio_t}

\end{figure}


As discussed throughout the paper, the scaling parameter for the constraint function $q(\mathbf{x})$ in \eqref{eq: q credible interval} is \emph{a priori} determined by \eqref{eq: rkhs beta} for \textsf{S{\footnotesize AFE}O{\footnotesize PT}}, and is \emph{adapted} by feedback via $\beta_{t+1}=\varphi(\Delta \alpha_{t+1})$  \eqref{eq: update rule} for the proposed \textsf{S{\footnotesize AFE}-B{\footnotesize OCP}}, while we fix the scaling parameter for the objective function $f(\mathbf{x})$ in \eqref{eq: f credible interval} to $3$ since it does not affect the safety guarantee for both \textsf{S{\footnotesize AFE}O{\footnotesize PT}} (see \cite[Theorem 6]{srinivas2012tit}) and \textsf{S{\footnotesize AFE}-B{\footnotesize OCP}}. The objective observation noise variance is set to $\sigma_f^2=2.5\times10^{-3}$; and the initial safe decision is chosen as $\mathbf{x}_0=0$ for which we have $q(\mathbf{x}_0)=0.946>0$. For \textsf{S{\footnotesize AFE}-B{\footnotesize OCP}}, we set the update rate in \eqref{eq: update rule}  to $\eta=2.0$.  All results are averaged over 1,000 experiments, with error bars shown to encompass $95\%$ of the realizations. Each experiment corresponds to a random draw of the objective function and to random realization of the observation noise signals. The implementation examples of the synthetic benchmark can be found here\footnote{https://github.com/yunchuan-zhang/safe-bocp}.

\begin{figure}[t]

  \centering

  \includegraphics[scale=0.385]{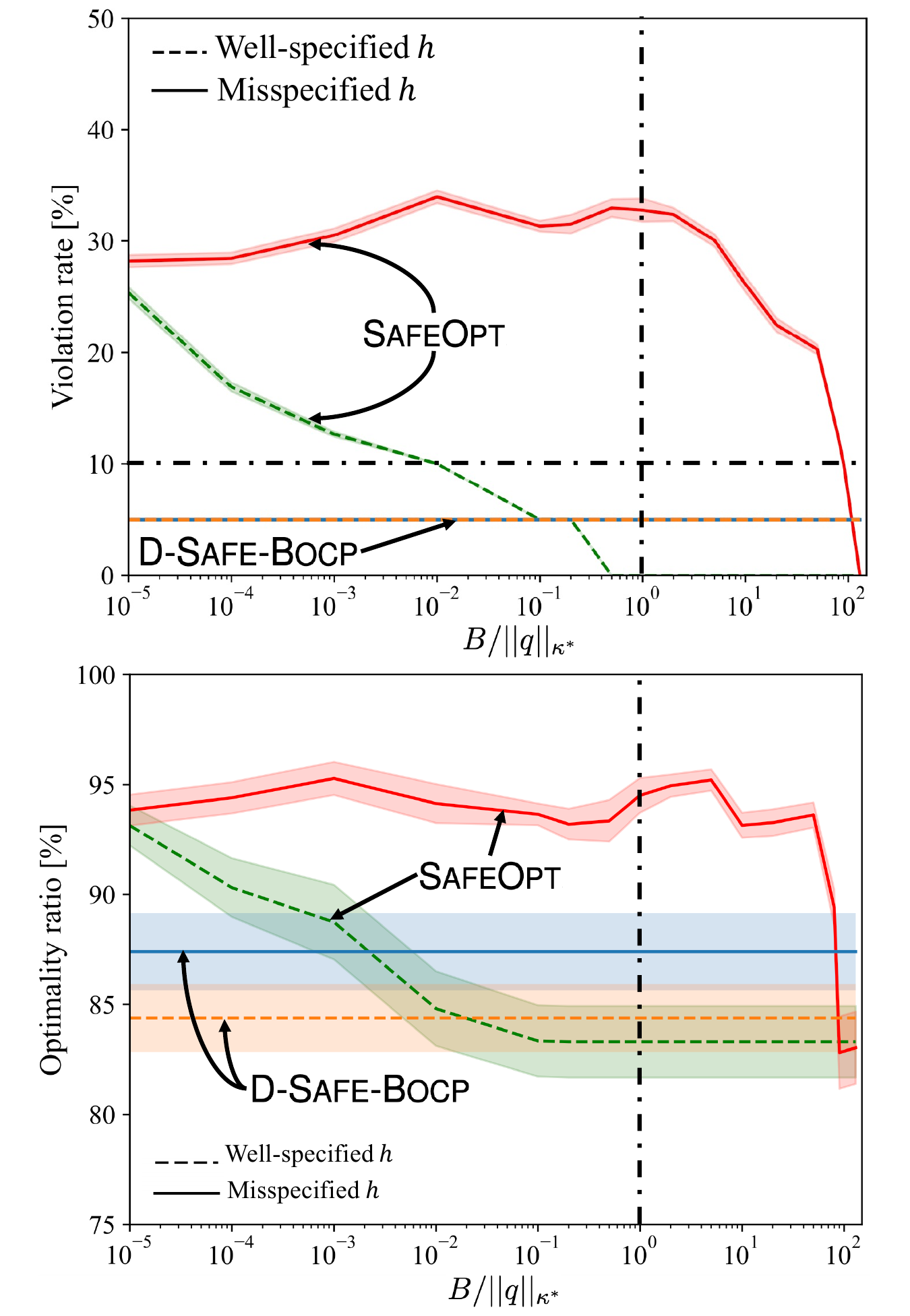}
  \caption{Violation rate \eqref{eq:goal} (top) and optimality ratio \eqref{eq: optimality ratio} (bottom) against the ratio between the  RKHS norm bound $B$ assumed by GP and the actual norm $||q||_{\kappa^*}$ in \eqref{eq: norm bound B}. The dashed lines are obtained with well-specified GP models,  which corresponds to kernel bandwidth $h=1/1.69$ (same one for $\kappa^*(\mathbf{x},\mathbf{x}')$), while the solid lines are obtained with misspecified GP models, having  kernel bandwidth $h=1/14.58$.
  \label{fig: vio_B}}
\end{figure}

\subsection{Deterministic Safety Requirement}\label{ssec: exp deterministic}
As explained in Sec. \ref{sec: safeopt}, \textsf{S{\footnotesize AFE}O{\footnotesize PT}} requires the GP model for the constraint function $q(\mathbf{x})$ to be well specified \eqref{eq: rkhs for q}--\eqref{eq: norm bound B} in order to meet safety conditions. To study the impact of violations of this assumption, we start by considering the noiseless case, i.e., $\sigma_q^2=0$, and we vary the kernel bandwidth $h$ adopted for the GP models used as surrogates for the objective and constraint functions as discussed earlier. 

Fig. \ref{fig: vio_t} shows the violation rate and optimality ratio against the iteration index $t$. For \textsf{D-S{\footnotesize AFE}-B{\footnotesize OCP}}, we set the update rate as $\eta=2$ and the target violation rate to $\alpha=0.3$, while \textsf{S{\footnotesize AFE}O{\footnotesize PT}} assumes target $\alpha=0$ with RKHS norm bound $B=||q||_{\kappa^*}$. For both schemes, the total number of iterations is $T=50$, and the misspecified GP with RBF kernel bandwidth $h=1/14.58$ is adopted.

The violation rate obtained by \textsf{S{\footnotesize AFE}O{\footnotesize PT}} is above the target $\alpha=0.3$ for a significant interval of time $t$, and it progressively falls below the target with a larger $t$, while \textsf{D-S{\footnotesize AFE}-B{\footnotesize OCP}} meets the deterministic safety requirement \eqref{eq:goal} with the pre-determined target $\alpha=0.3$ across all iterations. Furthermore, the optimality ratio obtained by \textsf{D-S{\footnotesize AFE}-B{\footnotesize OCP}} is larger than \textsf{S{\footnotesize AFE}O{\footnotesize PT}} after iteration $t=13$, converging to $97.5\%$ at iteration $t=20$. In contrast, \textsf{S{\footnotesize AFE}O{\footnotesize PT}} converges to optimality ratio of $94.5\%$ at iteration $t=25$, at which point the target safety level $\alpha=0.3$ is violated.


\begin{figure}[t]

  \centering
  \centerline{\includegraphics[scale=0.36]{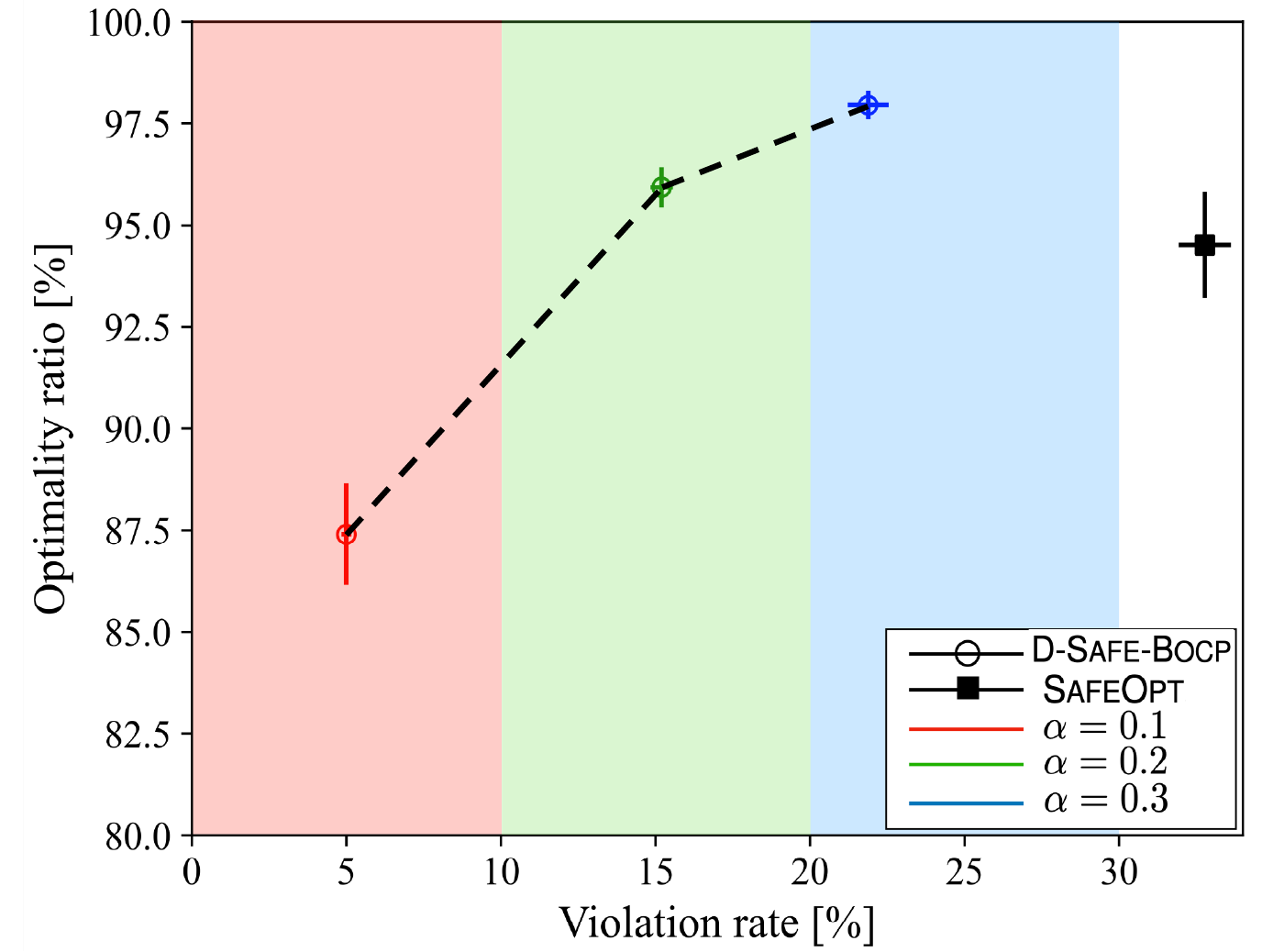}}
  \caption{$\text{Violation-rate}(T)$ and optimality ratio for different target violation rates $\alpha$ for \textsf{D-S{\footnotesize AFE}-B{\footnotesize OCP}}, with update rate $\eta=2$, misspecified kernel bandwidth $h=1/14.58$, and RKHS norm bound $B=||q||_{\kappa^*}$. The background colors represent intervals in which the safety requirement (\ref{eq:goal}) is met (see text for an explanation).}
  \label{fig: pv}

\end{figure}

Fig. \ref{fig: vio_B} shows the $\text{violation-rate}(T)$ in \eqref{eq:goal} with $T=20$, as well as the optimality ratio \eqref{eq: optimality ratio}, as a function of constant $B$ assumed by \textsf{S{\footnotesize AFE}O{\footnotesize PT}} for both well-specified and misspecified GPs, and the target violation rate is set to $\alpha=0.1$. Note that the performance of \textsf{D-S{\footnotesize AFE}-B{\footnotesize OCP}} does not depend on the value of $B$, which is an internal parameter for \textsf{S{\footnotesize AFE}O{\footnotesize PT}}, but it is affected by the choice of parameter $ h$. By Theorem \ref{theorem: safeopt theorem}, any value $B\geq||q||_{\kappa}$ in \eqref{eq: norm bound B} guarantees the safety of \textsf{S{\footnotesize AFE}O{\footnotesize PT}}. However, since RKHS norm for the misspecified GP is generally unknown, we plot violation rate and optimality ratio as functions of the ratio $B/||q||_{\kappa^*}$, to highlight the two regimes with well specified and misspecified value of $B$.  

Confirming Theorem \ref{theorem: safeopt theorem}, with a ratio $B/||q||_{\kappa^*}\geq1$ for the well-specified GP with kernel $\kappa(\mathbf{x},\mathbf{x}')=\kappa^*(\mathbf{x},\mathbf{x}')$, \textsf{S{\footnotesize AFE}O{\footnotesize PT}} is seen to strictly satisfy the deterministic safety constraint \eqref{eq:goal}, since the violation rate is equal to zero, as per its target. Instead, when $B/||q||_{\kappa^*}<1$, and/or when the GP is misspecified, i.e., $\kappa(\mathbf{x},\mathbf{x}')\neq\kappa^*(\mathbf{x},\mathbf{x}')$, the violation rate exceeds the target $\alpha$. In contrast, \textsf{D-S{\footnotesize AFE}-B{\footnotesize OCP}} obtains a violation rate below the target $\alpha$,  irrespective of kernel bandwidth $h$ assumed in GP.

In terms of optimality ratio, in the regime $B/||q||_{\kappa^*}\geq1$, with a well-specified GP parameter $h$, \textsf{S{\footnotesize AFE}O{\footnotesize PT}} achieves around 83\%, while \textsf{D-S{\footnotesize AFE}-B{\footnotesize OCP}} obtains the larger optimality ratio 84.5\%. In contrast, with a misspecified value $h$, \textsf{D-S{\footnotesize AFE}-B{\footnotesize OCP}} achieves an optimality ratio around 87.5\%, while the optimality ratio of \textsf{S{\footnotesize AFE}O{\footnotesize PT}} is larger, but this comes at the cost of the violation of the safety requirement. Note that a misspecified value of the kernel bandwidth $h$ does not necessarily reduce the performance of \textsf{D-S{\footnotesize AFE}-B{\footnotesize OCP}}, which is improved in this example.

The trade-off between violation rate and optimality ratio is studied in Fig. \ref{fig: pv} by varying the target violation rate $\alpha$ for \textsf{D-S{\footnotesize AFE}-B{\footnotesize OCP}}. For each value of $\alpha$,  we show the achieved pair of violation rate and optimality ratio, along with the corresponding realization ranges along the two axes. Recall that for \textsf{S{\footnotesize AFE}O{\footnotesize PT}} the assumed target is $\alpha=0$, and hence one pair is displayed. We focus here on the misspecified GP case, i.e., $\kappa(\mathbf{x},\mathbf{x}')\neq \kappa^*(\mathbf{x},\mathbf{x}')$, while the  \textsf{S{\footnotesize AFE}O{\footnotesize PT}} parameter $B$ is selected to the ``safe'' value $B=||q||_{\kappa^*}$, which is unaware of kernel misspecification.

\begin{figure}[t]

  \centering
  \centerline{\includegraphics[scale=0.845]{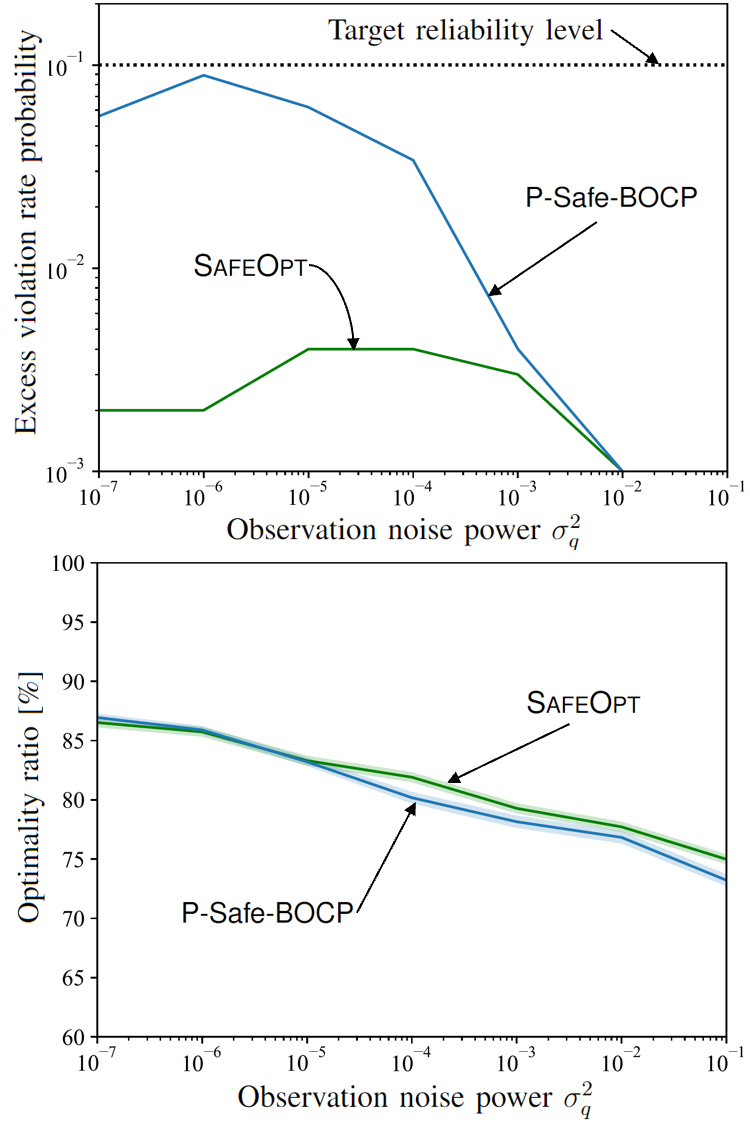}}
  \caption{Probability of excessive violation rate \eqref{eq: probabilistic goal} (top) and optimality ratio \eqref{eq: optimality ratio} (bottom) as a function of constraint observation noise power $\sigma^2_q$, with update rate $\eta=2$, RKHS norm bound $B=10||q||_{\kappa^*}$, and well-specified kernel bandwidth $h=1/1.62$. }
  \label{fig: well noise}

\end{figure}

For each value of $\alpha\in \{0.1,0.2,0.3\}$, the figure highlights the intervals of violation rates that meet the safety requirement (\ref{eq:goal}) using different colors. Specifically, for $\alpha=0.1$, all violation rates below $0.1$ are acceptable, as denoted by the red interval; for $\alpha=0.2$, all violation rates in the red and green intervals are acceptable; and for $\alpha=0.3$, all violation rates below in the cyan, green, and red interval meet the safety constraint. 


The figure shows that the violation rate obtained by \textsf{S{\footnotesize AFE}O{\footnotesize PT}} exceeds its target $\alpha=0$, and thus the safety requirement is violated. In contrast, as per the theory developed in this paper, \textsf{D-S{\footnotesize AFE}-B{\footnotesize OCP}} meets violation-rate requirement for all values of the target $\alpha$. Moreover, as the tolerated violation rate $\alpha$ increases, the optimality ratio of \textsf{D-S{\footnotesize AFE}-B{\footnotesize OCP}} is enhanced, indicating a trade-off between the two metrics. When increasing the target violation rate $\alpha$, \textsf{D-S{\footnotesize AFE}-B{\footnotesize OCP}} raises the algorithmic target level $\alpha_{\text{algo}}$ in \eqref{eq:alpha_algo}, making it possible for the optimizer to reduce the time spent under the maximally cautious scaling $\beta_t=\infty$ in \eqref{eq:betavar}. Consequently, with a larger $\alpha$, the optimality ratio of \textsf{D-S{\footnotesize AFE}-B{\footnotesize OCP}} gains from more explorations of the objective function $f(\mathbf{x})$.




\begin{figure}[t]

  \centering
  \centerline{\includegraphics[scale=0.83]{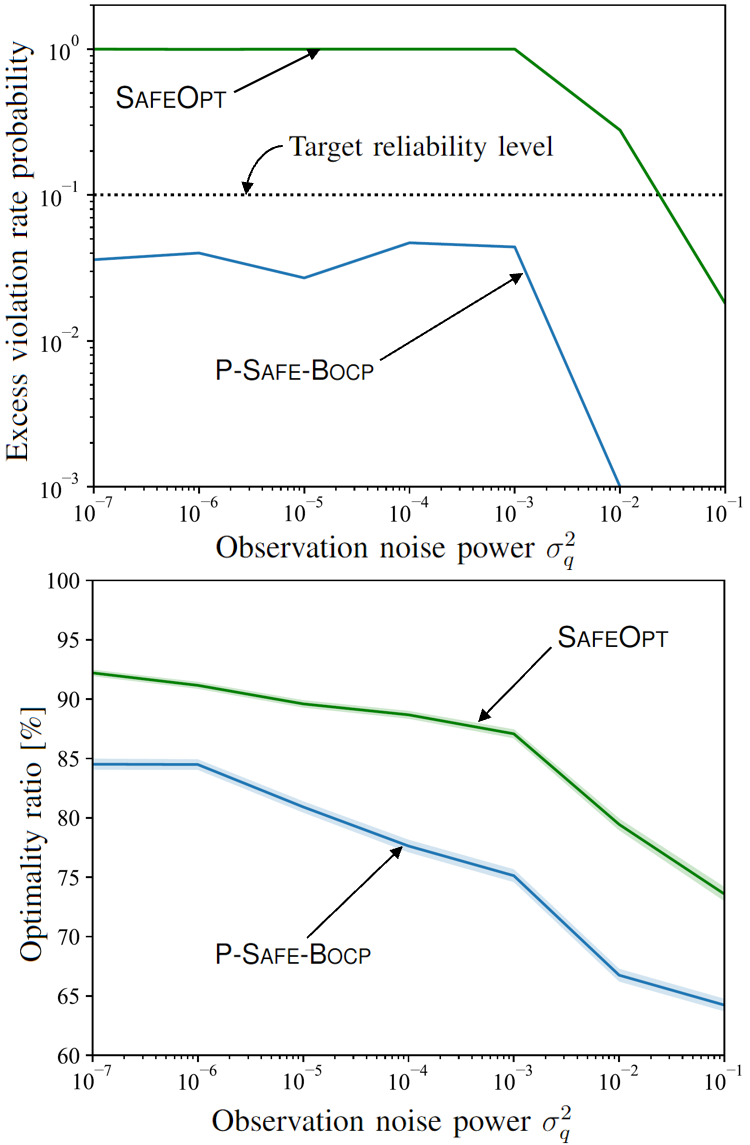}}
  \caption{Excess violation rate probability \eqref{eq: probabilistic goal} (top) and optimality ratio \eqref{eq: optimality ratio} (bottom) as a function of constraint observation noise power $\sigma^2_q$, with update rate $\eta=2$, RKHS norm bound $B=10||q||_{\kappa^*}$, and misspecified kernel bandwidth $h=1/14.58$. }
  \label{fig: mis noise}

\end{figure}

\subsection{Probabilistic Safety Constraint}\label{ssec: synthetic p-safe-bocp}

We now turn to considering scenarios with Gaussian observation noise $\sigma^2_q>0$, and aim at evaluating the performance in terms of  probabilistic safety requirement \eqref{eq: probabilistic goal} and optimality ratio (\ref{eq: optimality ratio}). We set the total number of iterations $T=25$, the target reliability level $1-\delta=0.9$ with target violation 
rate $\alpha=0.1$ for \textsf{P-S{\footnotesize AFE}-B{\footnotesize OCP}}, and with $\alpha=0$ for $\textsf{S{\footnotesize AFE}O{\footnotesize PT}}$ in accordance with $\textsf{S{\footnotesize AFE}O{\footnotesize PT}}$'s design. For the latter scheme, we set the ``safe'' value $B=10||q||_{\kappa^*}$, while we consider both the well-specified kernel bandwidth $h=h^*=1/1.62$, and the misspecified one $h=1/14.58<h^*$,  as considered also in the previous set of experiments. For all schemes, the excess violation rate probability in \eqref{eq: probabilistic goal} is obtained by averaging over 10,000 realizations. 



We plot the excess violation rate probability \eqref{eq: probabilistic goal} and the optimality ratio in Fig. \ref{fig: well noise}  and Fig. \ref{fig: mis noise} against the observation noise power $\sigma^2_q$. The first figure corresponds to the case of a well-specified kernel bandwidth while for the second we adopted misspecified value. Confirming the theory, in the former case, both \textsf{S{\footnotesize AFE}O{\footnotesize PT}} and \textsf{P-S{\footnotesize AFE}-B{\footnotesize OCP}} attain an excess violation rate probability below the target level $1-\delta$. In contrast, for a misspecified kernel, \textsf{S{\footnotesize AFE}O{\footnotesize PT}} can only satisfy the constraint \eqref{eq: probabilistic goal} for sufficiently large observation noise, but \textsf{P-S{\footnotesize AFE}-B{\footnotesize OCP}} still meets the probability safety constraint \eqref{eq: probabilistic goal}.  We note that a larger observation noise is beneficial to \textsf{S{\footnotesize AFE}O{\footnotesize PT}} in terms of safety since it forces a larger level of pessimism in the definition of the safe set $\mathcal{S}_{t+1}$ in \eqref{eq: safebo safe set}. 

In terms of optimality ratio, larger observation noise power $\sigma^2_q$ generally yields a degraded optimality ratio. In the well-specified regime considered in Fig. \ref{fig: well noise}, both schemes have comparable performance and the optimality ratio gap is no more than 5\%. In the misspecified regime demonstrated in Fig. \ref{fig: mis noise}, the performance levels are not comparable, since the gains of recorded for \textsf{S{\footnotesize AFE}O{\footnotesize PT}} come at the cost of violations of the safety constraint \eqref{eq: probabilistic goal}, except for a sufficiently large observation noise power, here $\sigma^2_q\geq0.1$. 

\section{Numerical Results for Real World Applications}\label{sec: real world}
In this section, we compare \textsf{S{\footnotesize AFE}O{\footnotesize PT}} \cite{sui2015safeopt} and \textsf{S{\footnotesize AFE}-B{\footnotesize OCP}} in two real-world applications, with the goal of validating the safety gains obtained by the proposed method along the optimization process.

\subsection{Safe Movie Recommendation}\label{ssec: safe movie}

As in \cite{sui2015safeopt}, consider a system that sequentially recommends movies to a user. Each user assigns a score from 1 to 5 to a recommended movie. Following standard matrix factorization algorithms, we introduce a feature vector $\mathbf{x} \in \mathbbm{R}^d$ for each movie. Accordingly, selecting a movie amounts to choosing a vector $\mathbf{x}$ within a set of possible movies. Denote as $r(\mathbf{x})$ the rating assigned by a user to movie $\mathbf{x}$. A recommendation is deemed to be unsafe if the user assigns it a rating strictly smaller than 4, i.e., if $r(\mathbf{x})< 4$. Accordingly, we set both objective function $f(\mathbf{x})$ and constraint function $q(\mathbf{x})$ to be equal to $f(\mathbf{x})=
q(\mathbf{x})=r(\mathbf{x})-4$. We focus on the deterministic safety constraint (\ref{eq:goal}), since the ratings are assumed to be observed with no noise.

To define a GP model for the function that maps a movie feature vector $\mathbf{x}$ to a rating $r(\mathbf{x})$, we need to specify a kernel function, which describes the similarity between movies. As in \cite{sui2015safeopt}, we adopt the linear kernel\begin{align}
\kappa(\mathbf{x},\mathbf{x}')=\mathbf{x}^{\sf T} \mathbf{x}', \label{eq: linear kernel}
\end{align} for any two movie feature vectors $\mathbf{x}$ and $\mathbf{x}'$.

The feature vectors $\mathbf{x}$ for movies are optimized using the MovieLens-100k dataset \cite{harper2016movielens}, which includes sparse rating observations of 1,680 movies from 943 users. Specifically, as in \cite{sui2015safeopt}, we randomly select 200 users to form the training data set, and we set $d=20$ for the size of the feature vectors. Training applies the standard matrix factorization algorithm \cite{lee2000algorithms}. For testing,  we pick the 10 test users, not selected for training, that have the most rated movies, and remove the movies with no rating from the possible selections. 



\begin{figure}[t]

  \centering
  \centerline{\includegraphics[scale=0.42]{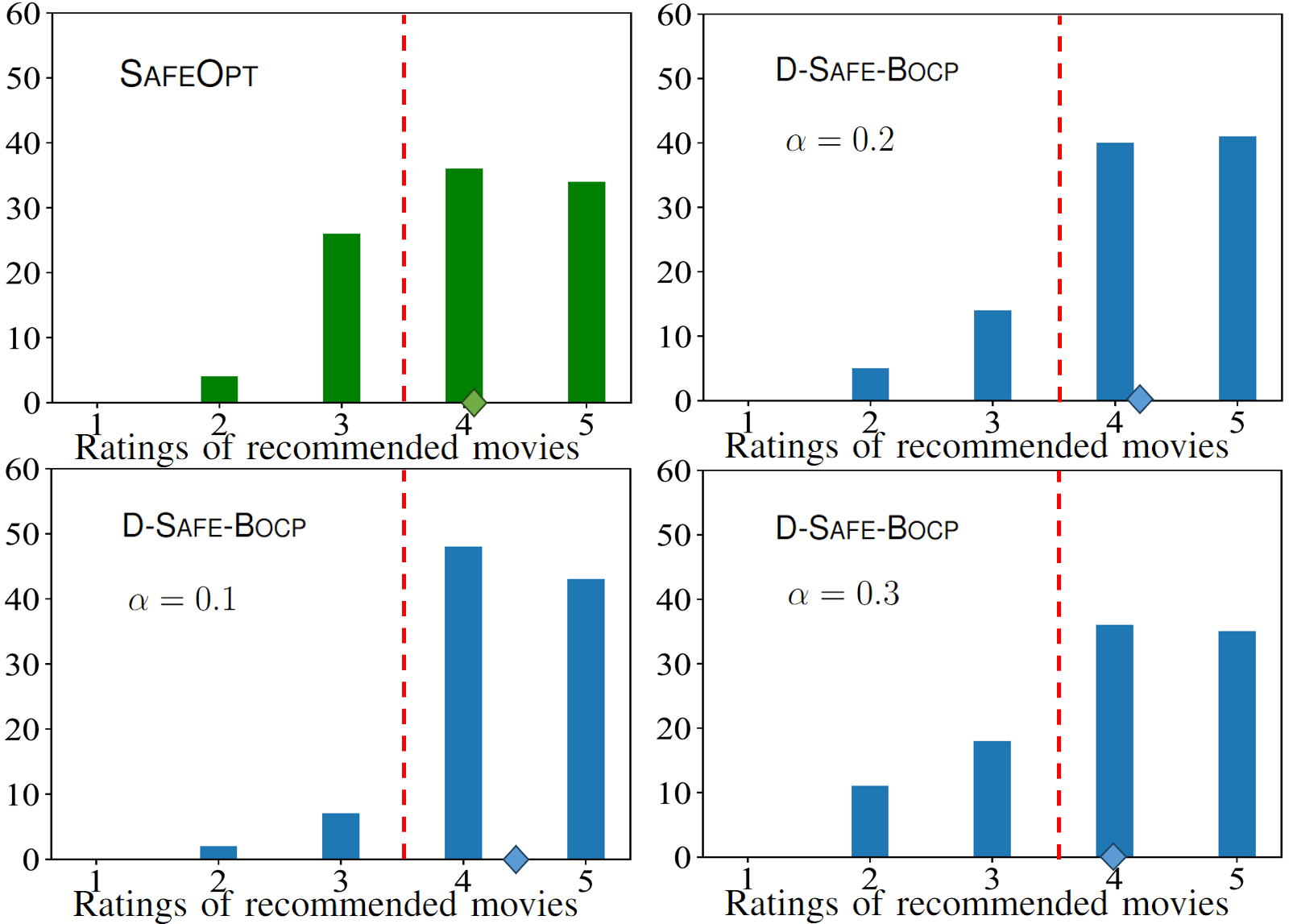}}
  \caption{Histograms of the ratings of recommended movies by  \textsf{S{\footnotesize AFE}O{\footnotesize PT}}, as well by \textsf{D-S{\footnotesize AFE}-B{\footnotesize OCP}}  under different target violation rates $\alpha$. The dashed lines represent the safety threshold for the recommendations, and the marker on the horizontal axis represents the average rating of the  recommendations.}
  \label{fig: safe movie hist}

\end{figure}




Since the true underlying function that maps movie feature vector $\mathbf{x}$ to rating $r(\mathbf{x})$ is unknown, it is not possible to evaluate the RKHS norm $||q||_{\kappa}$ in \eqref{eq:RKHS_norm} required by \textsf{S{\footnotesize AFE}O{\footnotesize PT}}. Accordingly, as in \cite{Felix2016safeopt}, we set $B=3$ a priori for \textsf{S{\footnotesize AFE}O{\footnotesize PT}}. In this experiment, we run both \textsf{S{\footnotesize AFE}O{\footnotesize PT}} and \textsf{D-S{\footnotesize AFE}-B{\footnotesize OCP}} for $T=100$ iterations on the selected 10 test users.  We randomly select a movie rated as 4 for each test user as the initial starting point $\mathbf{x}_0$, and set the update rate $\eta=10$ for \textsf{D-S{\footnotesize AFE}-B{\footnotesize OCP}}. 



To evaluate the performance of both schemes, we show in Fig. \ref{fig: safe movie hist}  the histograms of the ratings across all selected movies during the optimization procedure. The vertical dashed line represents the safety threshold between safe and unsafe recommendations. The marker on the horizontal axis marks the average rating. For \textsf{D-S{\footnotesize AFE}-B{\footnotesize OCP}} we have the flexibility to vary the target violation rate $\alpha$, while we recall that for \textsf{S{\footnotesize AFE}O{\footnotesize PT}} the target is $\alpha=0$. 

The top-left panel of Fig. \ref{fig: safe movie hist} shows that \textsf{S{\footnotesize AFE}O{\footnotesize PT}} does not meet the safety requirement \eqref{eq:goal} with $\alpha=0$ owing to the mismatch between the assumptions made by the scheme and the true, unknown, constraint function. The remaining panels demonstrate that, in contrast,  \textsf{D-S{\footnotesize AFE}-B{\footnotesize OCP}} can correctly control the fraction $\alpha$ of unsafe recommendations. 

\subsection{Chemical Reaction Optimization}\label{ssec: PFR}
Finally, we consider the plug flow reactor (PFR) problem introduced in \cite{kang2019glucose}\footnote{Simulator available at https://github.com/VlachosGroup/Fructose-HMF-Model}, which seeks for optimal chemical reaction parameters $\mathbf{x}\in [140,200]\times (0,1] \subset \mathbbm{R}^2$, with the first dimension being the temperature ($^\circ C$) and the second being  the pH value. The goal is to maximize the yield (\%), which we set as the objective $f(\mathbf{x})$, while keeping an acceptable selectivity level (\%), which we denote as $s(\mathbf{x})$. We refer to \cite{kang2019glucose} for a precise definition of these terms.

\begin{figure}[t]

  \centering
  \centerline{\includegraphics[scale=0.83]{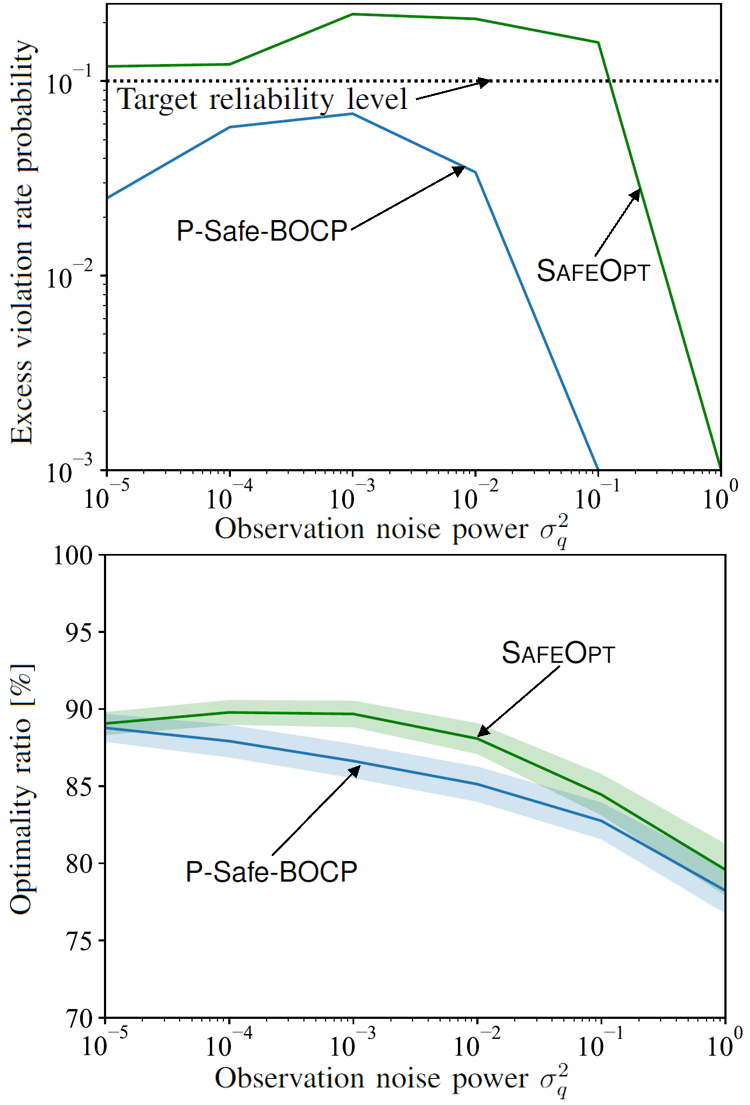}}
  \caption{Probability of excessive violation rate \eqref{eq: probabilistic goal} (top) and optimality ratio \eqref{eq: optimality ratio} (bottom) as a function of constraint observation noise power $\sigma^2_q$, with update rate $\eta=2$, RKHS norm bound $B=3$, and kernel bandwidth $h=1/2.88$ for the chemical reaction problem. }
  \label{fig: pfr}

\end{figure}


A reaction vector is deemed to be unsafe if the resulting selectivity level is lower than the corresponding yield, hence we define the constraint function as $q(\mathbf{x})=s(\mathbf{x})-f(\mathbf{x})$.  We assume the presence of non-zero Gaussian observation noise $z_t$ for the constraint function, i.e., $\sigma^2_q>0$. Accordingly, we focus on the probabilistic safety constraint \eqref{eq: probabilistic goal}, and compare the performance of \textsf{S{\footnotesize AFE}O{\footnotesize PT}} and \textsf{P-S{\footnotesize AFE}-B{\footnotesize OCP}}. We adopt GP surrogates model for both $f(\mathbf{x})$ and $q(\mathbf{x})$ with RBF kernel having bandwidth $h=1/2.88$.


Similar to Sec. \ref{ssec: safe movie}, since the smoothness property of the true underlying functions $q(\mathbf{x})$ is unknown, we assume the constant $B=3$ for \textsf{S{\footnotesize AFE}O{\footnotesize PT}} \cite{Felix2016safeopt}. The initial decision $\mathbf{x}_0$ is randomly chosen among the a priori known safe decisions that satisfy the constraint  $q(\mathbf{x_0})\geq 0$, and we  set the total number of optimization round to be $T=50$. Other settings are as in Sec. \ref{ssec: synthetic data}.  

In a similar manner to Sec. \ref{ssec: synthetic p-safe-bocp}, we demonstrate the excess violation rate probability \eqref{eq: probabilistic goal} and the optimality ratio in Fig. \ref{fig: pfr} as a function of the observation noise power $\sigma^2_q$. Confirming the discussion in Sec. \ref{ssec: synthetic p-safe-bocp} and the theory, \textsf{P-S{\footnotesize AFE}-B{\footnotesize OCP}} is seen to meet the probabilistic safety constraint \eqref{eq: probabilistic goal} irrespective of observation noise power, while \textsf{S{\footnotesize AFE}O{\footnotesize PT}} can only attain an excess violation rate probability below the target $1-\delta$ when the observation noise power is sufficiently large.


\section{Conclusions}\label{sec: conclusion}
In this work, we have introduced \textsf{S{\footnotesize AFE}-B{\footnotesize OCP}}, a novel BO-based zero-th order sequential optimizer that provably guarantees safety requirements irrespective of the properties of the constraint function. The key mechanism underlying \textsf{S{\footnotesize AFE}-B{\footnotesize OCP}} adapts the level of pessimism adopted during the exploration of the search space on the basis of noisy safety feedback received by the system. From synthetic experiment to real-world applications, we have demonstrated that the proposed \textsf{S{\footnotesize AFE}-B{\footnotesize OCP}} performs competitively with state-of-the-art schemes in terms of optimality ratio, while providing for the first time assumption-free safety guarantees.

Although in this work we have built on \textsf{S{\footnotesize AFE}O{\footnotesize PT}} for the acquisition process, the proposed framework could be generalized directly to any other Safe-BO schemes, such as  \textsf{G{\footnotesize O}OSE}\cite{ matteo2019goose}. Other possible extensions include accounting for multiple constraints, multi-fidelity approximations on objective and constraints \cite{zhang2024multi}, as well as taking into account contextual information during the optimization process \cite{widmer2023tuning}.



\bibliographystyle{ieeetr}
\bibliography{refer}

\newpage

 



\vfill

\end{document}